\documentclass[11pt]{article}

\usepackage[preprint]{acl}

\usepackage{times}
\usepackage{latexsym}
\usepackage{amsmath}
\usepackage{mathtools}
\usepackage{amsmath,amssymb,amsthm}
\usepackage{multicol}
    
\usepackage{amssymb}
\usepackage{booktabs}
\usepackage{multirow}
\usepackage{enumitem}
\usepackage{cleveref}
\usepackage{subcaption}
\usepackage{bbm}
\usepackage{stfloats}
\usepackage[T1]{fontenc}
\usepackage[utf8]{inputenc}
\usepackage{microtype}
\usepackage{inconsolata}
\usepackage{graphicx}
\usepackage{subcaption}
\usepackage{tikz}
\usepackage{pgfplots}
\usepgfplotslibrary{groupplots}
\pgfplotsset{compat=1.18}
\usepackage{CJKutf8}
\usepackage{xcolor}

\newcommand{\boxtitle}{\scriptsize\bfseries}
\newcommand{\boxsrc}{\tiny\color{black!55}}

\usepackage{amsmath,amssymb,amsthm}
\usepackage{xcolor}
\usepackage{subcaption}
\usepackage{graphicx}
\usepackage{tikz}
\usetikzlibrary{decorations.pathreplacing, arrows.meta,positioning}
\usepackage[most]{tcolorbox}

\newtheorem{proposition}{Proposition}

\newcommand{\valley}{\mathrm{valley}}
\newcommand{\peak}{\mathrm{peak}}
\newcommand{\Valley}{\mathrm{Valley}}
\newcommand{\Peak}{\mathrm{Peak}}
\newcommand{\pos}{\mathrm{pos}}
\newcommand{\nneg}{\mathrm{neg}}
\newcommand{\Pos}{\mathrm{Pos}}
\newcommand{\Neg}{\mathrm{Neg}}

\definecolor{vfBlue}{HTML}{0072B2}
\definecolor{vfOrange}{HTML}{E69F00}
\definecolor{vfGreen}{HTML}{009E73}
\definecolor{vfPurple}{HTML}{CC79A7}
\definecolor{vfRed}{HTML}{D55E00}
\definecolor{vfSky}{HTML}{56B4E9}
\definecolor{vfGray}{HTML}{6B7280}

\tikzset{
    token/.style={circle, inner sep=1.5pt},
    fav/.style={token, fill=vfBlue, draw=vfBlue!70!black},
    chal/.style={token, fill=vfOrange, draw=vfOrange!70!black},
    axis/.style={->, draw=vfGray},
    thresh/.style={densely dotted, very thick, draw=vfRed}
}

\newtcolorbox{rolloutbox}[2][]{
  enhanced,
  breakable,
  colback=#2!4,
  colframe=#2!55!black,
  boxrule=0.4pt,
  arc=1mm,
  left=1mm,
  right=1mm,
  top=1mm,
  bottom=1mm,
  fonttitle=\bfseries\footnotesize,
  fontupper=\footnotesize,
  #1
}

\title{A Gradient Perspective on RLVR Stability and\\ Winner Advantage Policy Optimization}

\author{Prasanth YSS\thanks{Equal contribution. PY: theory and experiments, ZR: engineering and experiments.}, Zhichen Ren\footnotemark[1], Rasa Hosseinzadeh, Ilan Gofman, Yuqi Chen, \\ \textbf{Zhaoyan Liu, Guangwei Yu, Jesse C. Cresswell, Satya Krishna Gorti} \\
       Layer 6 AI}

\newcommand{\qwen}{\text{Qwen3-4B}}
\newcommand{\smol}{\text{SmolLM3-3B}}
\newcommand{\gemma}{\text{Gemma3-4B}}

\newcommand{\ott}{\text{OTT-QA}}
\newcommand{\hotpot}{\text{Hotpot-QA}}

\newcommand{\datamath}{\text{Math-500}}
\newcommand{\numina}{\text{NuminaMath-LEAN}}
\newcommand{\wikidata}{\text{2-wiki}}
\newcommand{\aime}{\text{AIME'25}}

\newcommand{\method}{\text{WAPO}} 
\newcommand{\GRPO}{\text{GRPO}}
\newcommand{\GSPO}{\text{GSPO}}
\newcommand{\DAPO}{\text{DAPO}}

\begin{document}
\maketitle
\begin{abstract}
\looseness=-1
Reinforcement learning with verifiable rewards (RLVR) improves language-model reasoning, but \GRPO{}-style optimization remains prone to collapse. We analyse this instability through token-level gradient dynamics, deriving a taxonomy that predicts how updates affect next-token probabilities and entropy. The taxonomy shows that stability depends jointly on the advantage sign and token distribution under the current policy. Motivated by this finding, we propose Winner Advantage Policy Optimization (\method{}), a simple online clipped policy-gradient objective that updates only on positive-advantage completions. Across mathematical reasoning and multi-hop QA benchmarks, \method{} improves training stability and matches or outperforms baselines across multiple model families. Full code can be found at \href{https://github.com/layer6ai-labs/wapo}{https://github.com/layer6ai-labs/wapo}.
\end{abstract}

\section{Introduction}
\label{sec:introduction}
Reinforcement learning with verifiable rewards (RLVR) \citep{pyatkingeneralizing} has become a central recipe for improving language-model reasoning and planning \citep{wang2025kimina, he2025skywork}. \GRPO{} \citep{shao2024deepseekmath} and its variants \citep{yu2025dapo, liuunderstanding, cispo, zheng2025group, deng2025grpo, qi2026rethinking} use off-policy samples as introduced by \citet{PPO}. Rollouts generated by the old policy model $\pi_{\theta_\text{old}}$ are used for several gradient updates of the current policy model $\pi_{\theta}$ for sample efficiency. However, off-policy optimization methods remain fragile and susceptible to collapse (see Figure \ref{fig:collapse_examples}). 

\begin{figure}[t]                                                                  
  \centering                                                                               
  \begin{tikzpicture}[                                  
    basebox/.style={                                                          
      rounded corners=2pt,                                       
      inner sep=6pt,                               
      text width=0.94\columnwidth,                                         
      align=left,                                     
      font=\scriptsize                                                       
    },
    highbox/.style={
      basebox,
      fill=red!5,      
      draw=red!40      
    },
    lowbox/.style={
      basebox,
      fill=blue!5,     
      draw=blue!40     
    }
  ]

  \node[highbox] (high) {
    {\boxtitle High-entropy collapse}\\[-1pt]
    {\boxsrc Math, SmolLM3-3B, step 150}\\[2pt]
    {\ttfamily
    <think> ))\^{}x28**(\{\}) orderedList x231(color872426).txt \ldots{}\\
    Japanese/Arabic/CJK fragments \ldots{} identifier BERKELEY uc \ldots{}\\
    OS\begin{CJK*}{UTF8}{gbsn}受\end{CJK*} \ldots{} PEER assembly \ldots{} $\star\star\star\star\star$ \ldots{} server \ldots{}
    }\\[-1pt]
    \(\Rightarrow\) output spreads into task-irrelevant multilingual, code-like, and malformed text.
  };

  \node[lowbox, below=3pt of high] (low) {
    {\boxtitle Low-entropy collapse}\\[-1pt]
    {\boxsrc Math, Qwen-3-4B, step 139}\\[2pt]
    {\ttfamily
    <think>\\
    </think>\\
    Answer: <your answer>\\
    100000000000000000000000000000000000000000000000000000000000\\
    \ldots{}
    }\\[-1pt]
    \(\Rightarrow\) output contracts to a repetitive, malformed answer pattern.
  };

  \end{tikzpicture}
  \vspace{-6pt}
  \caption{Examples of collapsed generations during RLVR training. High-entropy collapse produces diverse task-irrelevant continuations, while low-entropy collapse degenerates into repetitive malformed text.}
  \label{fig:collapse_examples}
  \vspace{-14pt}
\end{figure}

Recent studies of collapse share a premise that collapse is caused by drift between old and new policies, and the mismatch between training and inference engines \citep{ deng2025grpo, qi2026rethinking,  zhang2026beyond}. Proposed solutions include importance sampling and trust region clipping \citep{schulman2015trpo, qi2025defeating, zheng2025stabilizing, zheng2025group}. These approaches are effective in practice, but do not predictably improve stability. We observe that removing more divergent tokens does not always lead to improved stability, and can cause collapse~(see \Cref{fig:dapo_smol_analysis}). 

We take an alternate approach in this study and do not assume all updates to divergent tokens are harmful. Instead of asking ``How far are we from the old policy?'', we ask ``How does a gradient update alter the local distribution?''. Our analysis depends on the probability  distribution of the current policy model $\pi_{\theta}$ and does not require explicitly modeling drift from $\pi_{\theta_\text{old}}$. Our approach provides an alternative lens for understanding instability when off-policy updates are unavoidable.

In \Cref{sec:gradient-token-taxonomy} we study the first-order gradient effect of policy updates on probability and entropy. We propose a taxonomy that theoretically predicts the separation between prior-aligned tokens that reduce entropy, and prior-opposed tokens which increase entropy. Exploration-like updates increase entropy and lead to collapsed generations, while prior-aligning tokens in negative rollouts also causes collapse. While some negative-advantage tokens may contain useful learning signal, selecting those tokens under coarse rewards is difficult and lies outside the scope of the current taxonomy. 

Motivated by these observations, we propose \emph{Winner Advantage Policy Optimization} (\method), a minimal modification to GRPO-style training. \method\ utilizes only positive-advantage completions in its policy-gradient update. Prompts with at least one winner are normalized over the group, while prompts with no positive-advantage completion contribute no policy-gradient. Unlike rejection fine-tuning or simple filtering, \method\ remains an online clipped policy-gradient method: it uses policy ratios and group-normalized advantages, but masks policy-gradient terms from non-winning completions.

We analyse \method\ in an idealized binary-reward setting. For a given prompt $x$, we show that various choices of normalization over positive rollouts ascend the same success-probability $q_x$, but with an adaptive factor that depends on $q_x$. For \method, this factor is $1-q_x$, which reduces update strength as prompts become saturated while preserving reinforcement from sampled successes.

Empirically, we evaluate \method\ comprehensively on \numina{}~\cite{wang2025kimina}, \datamath{}~\citep{lightman2024let}, \hotpot{}~\citep{ho2020constructing}, and \ott{}~\citep{chen2020open}, covering mathematical reasoning and multi-hop QA datasets. We compare across model families including \qwen{}~\citep{yang2025qwen3}, \smol{}~\citep{bakouch2025smollm3}, and \gemma{}~\citep{gemmateam2025gemma3technicalreport}. Across these settings, \method\ matches or improves over \GRPO{}, \DAPO{}~\citep{yu2025dapo}, and \GSPO{}~\citep{zheng2025group} in final accuracy and pass@$k$. The gains are clearest when baselines collapse; when baselines are stable, \method{} performs on par despite using only positive-advantage completions for policy-gradient updates. We also show out-of-distribution generalization of \method{} by evaluating \hotpot{} and \numina{} checkpoints on \wikidata{}~\citep{ho2020constructing} and \aime{}~\citep{dekoninck2026matharena} respectively.

\pgfplotsset{compat=1.18}
\definecolor{cottsmolsft50256648drdapo1001}{RGB}{128,177,211}
\definecolor{cottsmolbsft50256648drdapo1008}{RGB}{251,128,114}

\pgfplotstableread[col sep=comma]{plots/data/ott-smol-dr_dapo-10-0.1_f1_reward.csv}\rewardOne
\pgfplotstableread[col sep=comma]{plots/data/ott-smol-dr_dapo-10-0.8_f1_reward.csv}\rewardTwo
\pgfplotstableread[col sep=comma]{plots/data/ott-smol-dr_dapo-10-0.1_entropy.csv}\entropyOne
\pgfplotstableread[col sep=comma]{plots/data/ott-smol-dr_dapo-10-0.8_entropy.csv}\entropyTwo

\begin{figure*}[htbp] 
\centering

\hspace{-1.0cm}
\begin{minipage}[c]{0.66\textwidth}
\centering
\begin{tikzpicture}
    \begin{groupplot}[
        group style={
            group size=2 by 1,          
            horizontal sep=1.2cm,       
        },
        width=0.47\linewidth,           
        height=0.43\linewidth,          
        grid=both,
        grid style={line width=.1pt, draw=gray!25},
        major grid style={line width=.2pt, draw=gray!35},
        label style={font=\small},
        tick label style={font=\footnotesize},
        title style={font=\small\bfseries, yshift=-2pt},
        filter discard warning=false,
    ]

        \nextgroupplot[
            ylabel={F1 Reward},
            xlabel={Steps},
            x label style={yshift=0.20cm},
            title={(a) Reward},
            legend style={
                font=\scriptsize,
                draw=none, 
                fill=white,             
                fill opacity=0.6,
                text opacity=1,
                at={(0.01, 0.37)},
                anchor=north west, 
                legend columns=1,       
                row sep=0.1cm
            }
        ]
        
        \addplot[color=cottsmolsft50256648drdapo1001, line width=1.1pt, no marks, smooth, each nth point=5] 
            table [x index=0, y index=1] {\rewardOne};
        \addlegendentry{$\epsilon_{\text{low}}=0.9$}
        
        \addplot[color=cottsmolbsft50256648drdapo1008, line width=1.1pt, no marks, smooth, each nth point=5] 
            table [x index=0, y index=1] {\rewardTwo};
        \addlegendentry{$\epsilon_{\text{low}}=0.2$}
        
        \nextgroupplot[
            ylabel={Entropy},
            xlabel={Steps},
            x label style={yshift=0.20cm},
            title={(b) Entropy},
            legend style={
                font=\scriptsize,
                draw=none, 
                fill=white,             
                fill opacity=0.6,
                text opacity=1,
                at={(0.95, 0.45)},       
                anchor=east,            
                legend columns=1,       
                row sep=0.1cm
            }
        ]
        
        \addplot[color=cottsmolbsft50256648drdapo1008, line width=1.1pt, no marks, smooth, each nth point=5] 
            table [x index=0, y index=1] {\entropyTwo};
        \addlegendentry{$\epsilon_{\text{low}}=0.2$}
        
        \addplot[color=cottsmolsft50256648drdapo1001, line width=1.1pt, no marks, smooth, each nth point=5] 
            table [x index=0, y index=1] {\entropyOne};
        \addlegendentry{$\epsilon_{\text{low}}=0.9$}

    \end{groupplot}
\end{tikzpicture}
\end{minipage}%
\hspace{-0.50cm}%
\begin{minipage}[c]{0.30\textwidth}
\centering
\begin{tikzpicture}
    \begin{axis}[
        width=1.2\linewidth,
        height=\linewidth,         
        title={(c) Probabilities},
        xlabel={Sampled Probability},  
        ylabel={IR},
        label style={font=\small},
        title style={font=\small\bfseries, yshift=-2pt},
        x label style={xshift=0.18cm, yshift=0.35cm},
        y label style={yshift=-0.4cm},
        axis line style={draw=none},     
        tick style={draw=none},          
        xticklabels={,,},                
        yticklabels={,,},                
        xmin=0, xmax=1, ymin=0, ymax=1 
    ]
    \addplot graphics [xmin=0, xmax=1, ymin=0, ymax=1] {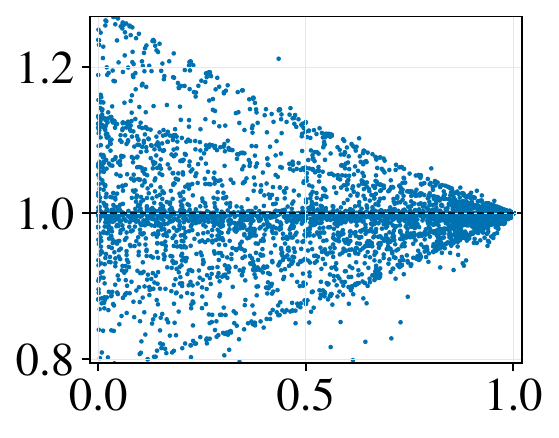};
    \end{axis}
\end{tikzpicture}
\end{minipage}
\vspace{-10pt}
\caption{We plot F1 reward (\subref{fig:dapo_smol_reward}) and entropy (\subref{fig:dapo_smol_entropy}) of \smol{} trained with DAPO under two negative clipping thresholds $\epsilon_\text{low}$ on \ott{}. More aggressive clipping (red) suppresses divergent negative tokens, but does not improve training stability, nor reduce entropy. Panel (\subref{fig:dapo_smol_probabilities}) shows importance ratio (IR) vs.\ sampled probability at one of the steps. The spread of importance ratio at lower probability indicates that clipping adversely affects lower-probability tokens \citep{qi2026rethinking}. In \Cref{sec:gradient-token-taxonomy} we show that restricting low-probability negative tokens leads to an increase in entropy and drives collapse.}
\label{fig:dapo_smol_analysis}

\phantomsubcaption\label{fig:dapo_smol_reward}
\phantomsubcaption\label{fig:dapo_smol_entropy}
\phantomsubcaption\label{fig:dapo_smol_probabilities}
\vspace{-10pt}
\end{figure*}

\section{Background}

RLVR commonly uses \GRPO-style objectives, where a rollout policy $\pi_{\theta_{\text{old}}}$ samples grouped completions, rewards are converted into group-relative advantages, and sequence-level advantages are applied through clipped importance-ratio updates, optionally with KL regularization to $\pi_{\theta_{\text{old}}}$ \citep{shao2024deepseekmath}. Variants such as \DAPO~\citep{yu2025dapo}, which modifies length normalization and clipping, and Dr.~GRPO~\citep{liuunderstanding}, which analyses reward and length-normalization biases, refine this framework, yet off-policy optimization under coarse rewards remains prone to collapse~\citep{deng2025grpo, qi2026rethinking}.

\paragraph{Policy mismatch and trust-regions.}

Instability is most often attributed to mismatch between $\pi_{\theta_{\text{old}}}$ and the current policy $\pi_{\theta}$. One line of work refines trust-region mechanisms: \GSPO{}~\citep{zheng2025group} addresses the use of token-level importance sampling with sequence-level clipping, while \citet{qi2026rethinking} show that ratio-based clipping can amplify harmful gradients on low-probability tokens. Another line targets train-inference mismatch through engineering fixes, including FP16 precision alignment~\citep{qi2025defeating}, adaptive learning-rate scheduling~\citep{zhang2026beyond}, and router synchronization for MoE architectures~\citep{ma2025stabilizing, ye2026adaptive}. These approaches all treat policy or system mismatch as central; our results instead motivate analysing collapse through token-level gradient effects.

\paragraph{Gradient analysis of training dynamics.}

Gradient-based analyses provide a complementary view. \citet{deng2025grpo} show that failed low-likelihood trajectories can induce likelihood displacement and drive collapse. In DPO-style \citep{rafailov2023direct} off-policy preference optimization, \citet{renlearning} identify a related ``squeezing effect'': negative updates reduce the likelihood of all responses, not only penalized ones, yielding a degenerate peaked distribution.

These works motivate our focus on how advantage sign interacts with the current token distribution during \GRPO{}-style training. Our taxonomy in Section~\ref{sec:gradient-token-taxonomy} unifies positive- and negative-advantage updates under this token-gradient view and motivates the positive-only update used by \method.

\section{A Gradient-Motivated Token Taxonomy}
\label{sec:gradient-token-taxonomy}
We train \smol{} on \ott{} with DAPO under different clipping thresholds (see Figure~\ref{fig:dapo_smol_analysis}). Lowering the clipping threshold $\epsilon_\text{low}$ removes divergent updates, but training collapses. Thus, divergence from $\pi_{\theta_{\text{old}}}$ alone does not predict stability. Figure \ref{fig:dapo_smol_probabilities} shows ratio-based clipping adversely affects low probability tokens since the spread of importance ratio is higher at lower probabilities. See Appendix \ref{app:additional_clip_eps_exps} for additional examples of collapse with GRPO, and different LLMs.

We now analyse training collapse through the local effect of per-token gradient updates and show that negative-advantage low probability tokens reduce entropy. We write the next-token distribution as $p=\operatorname{softmax}(z)$ for logits $z\in\mathbb{R}^V$ over vocabulary $[V]$. Let $A\in\mathbb{R}$ be the sequence-level advantage and consider the advantage-weighted negative log-likelihood, $\ell_s(z)=-A\log p_s$, where $s\in[V]$ is the sampled token. For any non-sampled token $i\neq s$, a small gradient descent step of size $\eta>0$ yields
\begin{equation}
\label{eq:delta_prob}
\Delta p_i
=
\eta A p_i\left(C(p)-p_s-p_i\right)
+O(\eta^2),
\end{equation}
\noindent where $C(p)=\sum_{j=1}^{V}p_j^2$ (proof in Appendix~\ref{app:gradient-token-taxonomy}). We note that in most practical settings the learning rate $\eta$ is a small number, so this first-order approximation is applicable.  Equation~\ref{eq:delta_prob} shows that a gradient update can increase the probability of non-sampled tokens. For negative-advantage samples, this is expected: suppressing the sampled token redistributes probability mass elsewhere. More surprisingly, it can also occur for positive-advantage samples. When $A>0$, a non-sampled token increases in probability if $p_s+p_i<C(p)$ and decreases if $p_s+p_i>C(p)$; for $A<0$, the inequalities reverse. Thus, reinforcing a low-probability successful token need not sharpen the distribution around that continuation, but can instead raise other low-probability alternatives and broaden the local distribution.

\pgfplotsset{compat=1.18}

\definecolor{barPeak}{RGB}{31, 119, 180} 
\definecolor{barValley}{RGB}{255, 127, 14} 

\begin{figure*}[t]
\centering
    \begin{subfigure}[b]{0.20\textwidth}
    \centering
    \begin{tikzpicture}[x=0.55cm,y=2.8cm] 
        \draw[->, >=stealth] (0,0) -- (4.2,0) node[right, font=\scriptsize] {token};
        \draw[->, >=stealth] (0,0) -- (0,1.1) node[above, font=\scriptsize] {\(p\)};
        \draw[dashed, draw=gray, thick] (0,0.5) -- (4.0,0.5) node[above right, black, inner sep=1pt, font=\tiny] {\(C(p)\)};
        \draw[fill=barPeak!40, draw=barPeak] (0.2,0) rectangle +(0.5, 0.85); 
        \draw[fill=barValley!40, draw=barValley] (0.8,0) rectangle +(0.5, 0.01);
        \draw[fill=barValley!40, draw=barValley] (1.4,0) rectangle +(0.5, 0.01);
        \draw[fill=barValley!40, draw=barValley] (2.0,0) rectangle +(0.5, 0.01);
        \draw[fill=barValley!40, draw=barValley] (2.6,0) rectangle +(0.5, 0.01);
        \draw[fill=barValley!40, draw=barValley] (3.2,0) rectangle +(0.5, 0.01);
        \draw[decorate, decoration={brace, amplitude=3pt, raise=2pt}] 
            (0.8, 0.01) -- (3.7, 0.01) 
            node[midway, above=4pt, font=\tiny, inner sep=0pt] {\( p \approx 0\)};
    \end{tikzpicture}
    \caption{Low Entropy}
    \label{fig:low-entropy}
    \end{subfigure}
    \hspace{-0.1pt}
    \begin{subfigure}[b]{0.20\textwidth}
    \centering
    \begin{tikzpicture}[x=0.55cm,y=2.8cm]
        \draw[->, >=stealth] (0,0) -- (4.2,0) node[right, font=\scriptsize] {token};
        \draw[->, >=stealth] (0,0) -- (0,1.1) node[above, font=\scriptsize] {\(p\)};
        \draw[dashed, draw=gray, thick] (0,0.2) -- (4.0,0.2) node[above right, black, inner sep=1pt, font=\tiny] {\(C(p)\)};
        \draw[fill=barPeak!40, draw=barPeak] (0.2,0) rectangle +(0.5, 0.35);
        \draw[fill=barPeak!40, draw=barPeak] (0.8,0) rectangle +(0.5, 0.28);
        \draw[fill=barPeak!40, draw=barPeak] (1.4,0) rectangle +(0.5, 0.22); 
        \draw[fill=barValley!40, draw=barValley] (2.0,0) rectangle +(0.5, 0.08); 
        \draw[fill=barValley!40, draw=barValley] (2.6,0) rectangle +(0.5, 0.01);
        \draw[fill=barValley!40, draw=barValley] (3.2,0) rectangle +(0.5, 0.01);
        \draw[decorate, decoration={brace, amplitude=3pt, raise=2pt}] 
            (2.6, 0.01) -- (3.7, 0.01) 
            node[midway, above=4pt, font=\tiny, inner sep=0pt] {\( p \approx 0\)};
    \end{tikzpicture}
    \caption{High Entropy}
    \label{fig:high-entropy}
    \end{subfigure}
    \hspace{-0.1pt}
    \begin{subfigure}[b]{0.27\textwidth} 
    \centering
    \begin{tikzpicture}
        \begin{axis} [
            ybar=1pt,
            width=\linewidth,
            height=4.5cm,          
            bar width=8pt,
            enlarge x limits=0.25,          
            ylabel={Tokens (k)},
            ylabel style={font=\scriptsize, yshift=-4pt},
            tick label style={font=\scriptsize},
            ymin=0, ymax=75, 
            xmin=0.5, xmax=3.5,
            xtick={1, 2, 3},
            xticklabels={Smol, Qwen, Gemma},
            grid=none, 
            axis y line=left,
            axis x line=bottom,
        ]   
            \addplot[draw=barPeak, fill=barPeak!40] coordinates {
                (1, 58.063)
                (2, 61.865)
                (3, 60.935)
            };
            \addplot[draw=barValley, fill=barValley!40] coordinates {
                (1, 7.438)
                (2, 3.671)
                (3, 2.654)
            };
        \end{axis}
    \end{tikzpicture}
    \caption{Token Count}
    \label{fig:token-count}
    \end{subfigure}
    \hspace{-0.1pt}
    \begin{subfigure}[b]{0.27\textwidth}
    \centering
    \begin{tikzpicture}
        \begin{axis} [
            ybar=1pt,
            width=\linewidth,
            height=4.5cm,          
            bar width=8pt,
            enlarge x limits=0.25,          
            ylabel={Mean Entropy},
            ylabel style={font=\scriptsize, yshift=-4pt},
            tick label style={font=\scriptsize},
            ymin=0, ymax=1.2, 
            xmin=0.5, xmax=3.5,
            xtick={1, 2, 3},
            xticklabels={Smol, Qwen, Gemma},
            grid=none, 
            axis y line=left,
            axis x line=bottom,
        ]   
            \addplot[draw=barPeak, fill=barPeak!40] coordinates {
                (1, 0.2719)
                (2, 0.0966)
                (3, 0.0698)
            }; 
            \addplot[draw=barValley, fill=barValley!40] coordinates {
                (1, 1.0676)
                (2, 0.8508)
                (3, 0.7035)
            };
        \end{axis}
    \end{tikzpicture}
    \caption{Mean Entropy}
    \label{fig:mean-entropy}
    \end{subfigure}
    \vspace{0.3cm}
    \begin{tikzpicture}
        \fill[barPeak!40, draw=barPeak] (0,0) rectangle (0.4,0.2) node[anchor=west, black, font=\small, xshift=4pt, yshift=-3pt] {\(\Peak\) \(\left(p_i \ge C(p)\right)\)};
        \fill[barValley!40, draw=barValley] (4.4,0) rectangle (4.8,0.2) node[anchor=west, black, font=\small, xshift=4pt, yshift=-3pt] {\(\Valley\) \(\left(p_i < C(p)\right)\)};
    \end{tikzpicture}
\vspace{-5pt}
\caption{\textbf{(\subref{fig:low-entropy}, \subref{fig:high-entropy}):} \Cref{eq:peak_valley_def} captures peaks and valleys in both low- and high-entropy distributions. \textbf{(\subref{fig:token-count}, \subref{fig:mean-entropy})} We report the number of sampled tokens and their average entropy for \smol{}, \qwen{}, and \gemma{} on NuminaMath-LEAN. Across models, valley tokens are sampled much less frequently than peak tokens, but have substantially higher entropy.}
\label{fig:peak_valley_unified_system}
\vspace{-10pt}
\end{figure*}

The quantity \(C(p)\) therefore acts as an adaptive reference level. Motivated by this, we define \(\peak\) and \(\valley\) sampled tokens as follows:
\begin{equation}
    \mathrm{Peak:} \,\, p_s \ge C(p), \ \ \mathrm{Valley:} \,\, p_s < C(p). \label{eq:peak_valley_def}
\end{equation}
Splitting these two cases by the sign of the advantage gives four regimes: $\Pos$-\(\peak\), $\Pos$-\(\valley\), $\Neg$-\(\peak\) and $\Neg$-\(\valley\).

This token taxonomy is more specific than entropy alone. Entropy is a property of the full next-token distribution and does not depend on which token was sampled. By contrast, the \(\peak/\valley\) distinction is conditioned on the token that is actually sampled. It therefore connects the sampled rollout to the local geometry of the softmax update. Figure~\ref{fig:peak_valley_unified_system}\subref{fig:low-entropy},\subref{fig:high-entropy} illustrates the  \(\peak/\valley\) categorization under two different probability profiles. Unlike a fixed probability cutoff, \(C(p)\) adapts to distribution sharpness: in flatter distributions, many tokens can be peak-like, while in concentrated distributions, only dominant tokens exceed \(C(p)\).

The threshold \(C(p)\) also satisfies the expected order properties. Let $p_\text{max}$, $p_\text{min}$ be the tokens with maximum and minimum probability under $p$, then
\begin{itemize}[leftmargin=*]
    \item \(p_{\max}\ge C(p)\). Hence every token attaining \(p_{\max}\) is a
    \(\peak\).
    \item \(p_{\min}\le C(p)\), with equality only when \(p\) is uniform on its
    support. Hence nearly every minimum-probability token is a \(\valley\).
\end{itemize}
\noindent Thus the taxonomy agrees with the intended intuition: high-probability sampled
tokens are \(\peak\)-like, while low-probability sampled tokens are \(\valley\)-like. The
advantage sign then determines whether the update reinforces or suppresses that
region of the distribution.

A similar first-order view also explains the entropy trends induced by the four
token types. Let \(H(p)=-\sum_i p_i\log p_i\). The change in entropy can be approximated as
\begin{align}\label{eq:delta_entropy}
\vspace{-2pt}
    \Delta H
    =
    -\eta A
    \Big[
        \Big(p_s\log p_s-\sum_i p_i^2\log p_i\Big) \nonumber \\
        + H(p)\Big(p_s-C(p)\Big) \Big] +O(\eta^2).
\vspace{-4pt}
\end{align}
For \(\valley\) tokens, the bracketed factor is negative (Appendix~\ref{app:entropy-direction}), so $\Pos$-\(\valley\) updates increase entropy and $\Neg$-\(\valley\) updates decrease it. The first term inside the bracket is zero in expectation, while for \(\peak\) tokens the second term is positive per definition in \Cref{eq:peak_valley_def}. Thus, $\Pos$-\(\peak\) updates usually decrease entropy, while $\Neg$-\(\peak\) updates tend to increase it. Appendix~\ref{app:entropy-direction} gives the full theoretical discussion; the next section verifies these predictions empirically during RL fine-tuning.

\section{Diagnosing Token-Level Instability in RLVR}
\label{sec:token-instability}

\paragraph{Isolating the four token regimes.}
We train \smol{} on \numina{} while masking the loss to each \(\peak\)--\(\valley\) and advantage-sign regime. Figure~\ref{fig:ablation_matrix}\subref{fig:ablation_reward},\subref{fig:ablation_entropy} shows distinct reward--entropy dynamics in each case. $\Pos$-\(\peak\) is stable but quickly plateaus, reinforcing already likely continuations, reducing entropy but offering limited pressure to explore alternative solution paths. $\Neg$-\(\peak\) and $\Pos$-\(\valley\), the two entropy-increasing regimes, lead to collapse: both move mass toward non-sampled alternatives, causing generations to drift toward increasingly random continuations and rewards to deteriorate.

\pgfplotsset{compat=1.18}

\definecolor{negValley}{RGB}{255, 150, 150}
\definecolor{negPeak}{RGB}{255, 100, 100} 
\definecolor{posValley}{RGB}{140, 220, 140}
\definecolor{posPeak}{RGB}{70, 180, 70} 
\definecolor{runChallenger}{RGB}{0, 168, 181} 
\definecolor{runFavorite}{RGB}{100, 110, 120}
\definecolor{runPos}{RGB}{44, 160, 44} 
\definecolor{runNeg}{RGB}{210, 115, 220}
\definecolor{runDrdapo}{RGB}{219, 0, 117}

\pgfplotstableread[col sep=comma]{plots/data/numina_smol_neg_challenger.csv}\dataNegChal
\pgfplotstableread[col sep=comma]{plots/data/numina_smol_neg_favorite.csv}\dataNegFav
\pgfplotstableread[col sep=comma]{plots/data/numina_smol_pos_challenger.csv}\dataPosChal
\pgfplotstableread[col sep=comma]{plots/data/numina_smol_pos_favorite.csv}\dataPosFav

\pgfplotstableread[col sep=comma]{plots/data/numina_smol_challenger.csv}\runChal
\pgfplotstableread[col sep=comma]{plots/data/numina_smol_favorite.csv}\runFav
\pgfplotstableread[col sep=comma]{plots/data/numina_smol_pos.csv}\runPosData
\pgfplotstableread[col sep=comma]{plots/data/numina_smol_neg.csv}\runNegData
\pgfplotstableread[col sep=comma]{plots/data/numina_smol_dr_dapo-10-0.1.csv}\runDrDapo

\begin{figure*}[htbp]
\centering
\begin{tikzpicture}

    \begin{groupplot}[
        group style={
            group size=2 by 2,          
            horizontal sep=1.8cm,
            vertical sep=1.7cm,
        },
        width=0.46\linewidth,
        height=0.22\linewidth,
        grid=both,
        grid style={line width=.1pt, draw=gray!25},
        major grid style={line width=.2pt, draw=gray!35},
        label style={font=\small},
        tick label style={font=\footnotesize},
        title style={font=\small\bfseries, yshift=-2pt},
        filter discard warning=false
    ]

        \nextgroupplot[
            restrict x to domain=0:60,
            xmin=0, xmax=60,
            ylabel={Reward},
            title={(a) Reward Optimization via \(\Peak\)-\(\Valley\) Dynamics}, 
            legend style={
                font=\scriptsize, draw=none, fill=none,
                at={(1.12, -0.25)},      
                anchor=north,
                legend columns=4,          
                /tikz/every even column/.append style={column sep=0.4cm} 
            }
        ]
        \addplot[color=negValley, line width=1.2pt, no marks, dashed, dash pattern=on 3pt off 2pt] table [x index=0, y index=1] {\dataNegChal};
        \addlegendentry{$\Neg$-\(\valley\)}

        \addplot[color=negPeak, line width=1.2pt, no marks, solid] table [x index=0, y index=1] {\dataNegFav};
        \addlegendentry{$\Neg$-\(\peak\)}

        \addplot[color=posValley, line width=1.2pt, no marks, dashed, dash pattern=on 3pt off 2pt] table [x index=0, y index=1] {\dataPosChal};
        \addlegendentry{$\Pos$-\(\valley\)}

        \addplot[color=posPeak, line width=1.2pt, no marks, solid] table [x index=0, y index=1] {\dataPosFav};
        \addlegendentry{$\Pos$-\(\peak\)}

        \nextgroupplot[
            restrict x to domain=0:60,
            xmin=0, xmax=60,
            ylabel={Entropy},
            title={(b) Exploration Decay Profiles}   
        ]
        \addplot[color=posValley, line width=1.2pt, no marks, dashed, dash pattern=on 3pt off 2pt] table [x index=0, y index=2] {\dataPosChal};
        \addplot[color=posPeak, line width=1.2pt, no marks, solid] table [x index=0, y index=2] {\dataPosFav};
        \addplot[color=negValley, line width=1.2pt, no marks, dashed, dash pattern=on 3pt off 2pt] table [x index=0, y index=2] {\dataNegChal};
        \addplot[color=negPeak, line width=1.2pt, no marks, solid] table [x index=0, y index=2] {\dataNegFav};

        \nextgroupplot[hide axis]

        \nextgroupplot[
            restrict x to domain=0:300,
            xmin=0, xmax=300,
            width=0.75\linewidth,
            height=0.27\linewidth,
            xshift=-0.24\linewidth,       
            ylabel={Reward},
            title={(c) Integrated Trajectory Comparison},
            legend style={
                font=\scriptsize,
                draw=none,
                fill=none,
                at={(0.5, -0.28)},        
                anchor=north,
                legend columns=5,          
                /tikz/every even column/.append style={column sep=0.4cm} 
            }
        ]

        \addplot[color=runFavorite, line width=1.2pt, no marks, smooth, each nth point=3] table [x index=0, y index=1] {\runFav};
        \addlegendentry{\(\Peak\)}

        \addplot[color=runPos, line width=1.2pt, no marks, smooth, each nth point=3] table [x index=0, y index=1] {\runPosData};
        \addlegendentry{$\Pos$}

        \addplot[color=runNeg, line width=1.2pt, no marks, smooth, each nth point=3] table [x index=0, y index=1] {\runNegData};
        \addlegendentry{$\Neg$}

        \addplot[color=runDrdapo, line width=1.4pt, no marks, dashed, dash pattern=on 4pt off 2pt, smooth, each nth point=3] table [x index=0, y index=1] {\runDrDapo};
        \addlegendentry{DAPO}

        \addplot[color=runChallenger, line width=1.2pt, no marks, smooth, each nth point=3] table [x index=0, y index=1] {\runChal};
        \addlegendentry{\(\Valley\)}

    \end{groupplot}
\end{tikzpicture}
\vspace{-10pt}
\caption{
Single-group taxonomy ablations on \smol{} trained on \numina{}.
Panels (\subref{fig:ablation_reward},\subref{fig:ablation_entropy}) vary \(\peak\)/\(\valley\) bounds, showing reward convergence and structural entropy decay. 
Panel (\subref{fig:ablation_entropy}) validates \Cref{eq:delta_entropy}: $\Pos$-\(\valley\) and $\Neg$-\(\peak\) increase entropy, while $\Pos$-\(\peak\) and $\Neg$-\(\valley\) decrease it. 
Entropy-increasing groups collapse rapidly across hyperparameters. 
Panel (\subref{fig:ablation_trajectory}) plots running-average reward for five primary configurations.
}
\label{fig:ablation_matrix}

\phantomsubcaption\label{fig:ablation_reward}
\phantomsubcaption\label{fig:ablation_entropy}
\phantomsubcaption\label{fig:ablation_trajectory} 
\vspace{-10pt}
\end{figure*}

$\Neg$-\(\valley\) shows a third pattern. It often improves the reward early, but eventually collapses. A negative update on a low-probability token suppresses an unlikely continuation and redistributes mass to other tokens. This can help when the token is spurious, but in high-entropy contexts where valley tokens are common (see \Cref{fig:peak_valley_unified_system}\subref{fig:mean-entropy}), it can also prematurely concentrate mass. The resulting failure is therefore low-entropy: generations become confident, repetitive, or generic (see \Cref{fig:collapse_examples}).

Overall, the quadrant experiments (Figure~\ref{fig:ablation_matrix}\subref{fig:ablation_reward},\subref{fig:ablation_entropy}) match the entropy predictions of the taxonomy: $\Pos$-\(\peak\) is stable but conservative; $\Neg$-\(\peak\) and $\Pos$-\(\valley\) are entropy-increasing and prone to random collapse; and $\Neg$-\(\valley\) can help initially but may induce overconfident collapse. Thus, instability depends not only on how far the update moves from the rollout policy, but on which token-level gradients survive the clipping rule. We show similar trends on \qwen{} in Appendix \ref{app:additional-token-dynamics}.

\paragraph{Taxonomy-based masking.}
$\Pos$-\(\peak\) is stable but fails to reach optimal rewards. An effective policy optimization must utilize the unstable but exploratory quadrants, i.e. $\Neg$-\(\peak\) or $\Pos$-\(\valley\). Group dynamics with more quadrants depends on the prior distribution of a model for a given dataset. However we observe consistent trends along four axis-aligned choices: $\Pos$, $\Neg$, $\Peak$, and $\Valley$. Each combines one entropy-increasing and one entropy-decreasing regime, retaining both exploration and exploitation pressure. Figure~\ref{fig:ablation_matrix}\subref{fig:ablation_trajectory} shows the resulting training dynamics. 

$\Peak$ training collapses rapidly despite having the stable $\Pos$-\(\peak\) component, suggesting that $\Neg$-\(\peak\) dominates by suppressing high-probability sampled tokens and sharply redistributing mass to alternatives. $\Neg$ training learns quickly but later collapses, consistent with prior observations \citep{zhusurprising}. Since $\Neg$-\(\peak\) appears in both masks, it is a plausible source of severe instability. $\Valley$ training is more stable and resembles a high-entropy filtering objective \citep{wangbeyond}. This matches the higher-entropy contexts in which valley tokens occur (Figure~\ref{fig:peak_valley_unified_system}\subref{fig:mean-entropy}). However, $\Valley$ underperforms both $\Pos$ and lenient-clipping baselines, likely because such tokens are sampled less often (see Figure~\ref{fig:peak_valley_unified_system}\subref{fig:token-count}). $\Valley$ tokens are also sensitive to ratio-based clipping and are aggressively removed (see Figure~\ref{fig:dapo_smol_probabilities}).

Overall the advantage sign is the most stable coarse filter. $\Pos$ training performs on par with the \DAPO{} baseline, suggesting that rewarded rollouts already contain sufficient learning signal. Unlike clipping or divergence-based constraints, this intervention introduces no additional regularizer or trust-region threshold; it simply restricts which advantage signs are allowed to update the model. 

\section{Winner Advantage Policy Optimization}
\label{sec:wapo}

From the previous section, we see that positive advantage updates contain both
exploitation and rewarded exploration: $\Pos$-\(\peak\) reinforces high-probability
successful tokens, while $\Pos$-\(\valley\) reinforces successful low-probability
tokens. This motivates a simple intervention: remove non-positive
advantage terms from the policy-gradient update. We call the resulting method \emph{Winner Advantage Policy Optimization}
(\method{}). Unlike rejection fine-tuning, \method{} remains an online
GRPO-style policy-gradient method: it uses sampled rollouts, group-normalized
advantages, token-level importance ratios, and clipping.

\paragraph{Binary-reward intuition.}
We first motivate \method{} in an idealized binary-reward setting. For a
fixed prompt \(x\), let $q_x = \Pr_{y\sim\pi_\theta(\cdot|x)}[r(x,y)=1]$
denote the probability that the current policy produces a correct response.
Since \(r(x,y)\in\{0,1\}\), the expected-reward objective is exactly the success
probability:
\vspace{-2pt}
\begin{equation}
J_x(\theta)
=
\mathbb{E}_{y\sim\pi_\theta(\cdot\mid x)}[r(x,y)]
=
q_x .
\vspace{-2pt}
\end{equation}
Therefore the policy gradient of the binary-reward objective is \(\nabla q_x\).

Let \(y_i=(y_{i1},\ldots,y_{iT_i})\) be the \(i\)-th rollout with sequence length $T_i$, $p_{ij}=\pi_\theta(y_{ij}\mid x,y_{i,<j})$ and $r_{i}$ denote $r(x, y_i)$. With \(G\) sampled rollouts, we have (see Appendix~\ref{app:wapo_analysis})
\vspace{-2pt}
\begin{equation}
\nabla q_x
\approx
\frac{1}{G}
\sum_{i=1}^{G}
\sum_{j=1}^{T_i}
r_i\nabla\log p_{ij} .
\label{eq:wapo_success_gradient}
\vspace{-2pt}
\end{equation}
We now consider a $\pos$-only, group-centered advantage update. Let \(\bar r = \frac{1}{G}\sum_i r_{i}\). For successful rollouts, we can define the advantage as:
\vspace{-2pt}
\begin{equation}
A_{i}^+ = (r_i-\bar r)\mathbbm{1}[r_i=1]
      = (1-\bar r)r_i .
\vspace{-2pt}
\end{equation}
Thus, normalizing by the group size gives
\vspace{-2pt}
\begin{equation}
\frac{1}{G}
\sum_{i=1}^{G}
\sum_{j=1}^{T_i}
A_i^+ \nabla \log p_{ij}
\approx
(1-q_x)\nabla q_x ,
\label{eq:wapo_one_minus_q}
\vspace{-2pt}
\end{equation}
where we use \(\bar r\approx q_x\) (see Appendix \ref{app:wapo_analysis}).

This shows that the $\pos$-only group-normalized update ascends the same direction as the binary policy gradient, but with an adaptive factor \(1-q_x\).  A different $\pos$-only normalization using the number of successful samples \(G\bar r\) yields an adaptive factor of $\tfrac{1-q_x}{q_x}$. See Appendix \ref{app:wapo_analysis} for more details. Both these adaptive factors have an intuitive interpretation: they automatically weight prompts according to success under current policy, putting more weight on harder prompts while attenuating easier ones. They differ in how aggressively they emphasize hard prompts. $\tfrac{1-q_x}{q_x}$  grows rapidly when $q_x$ is small, which may overemphasize rare or noisy successes under coarse rewards. We therefore use the bounded $1-q_x$ factor as the default in \method{}.

\paragraph{WAPO objective.}
Generalizing the above analysis to the continuous reward setting, for each prompt \(x\) we sample a group of \(G\) completions
\(\{y_i\}_{i=1}^{G}\) from the rollout policy \(\pi_{\theta_{\mathrm{old}}}\).
Let \(A_{i}^{+} = \max(A_{i}, 0)\), where \(A_{i}\) is the group-normalized advantage of completion \(i\). For token
\(j\) in rollout \(i\), define the token-level importance ratio $\rho_{ij}(\theta)=\frac{\pi_\theta(y_{ij}\mid x,y_{i,<j})}{\pi_{\theta_{\mathrm{old}}}(y_{ij}\mid x,y_{i,<j})}$. We optimize

\vspace{-2pt}
\begin{align}
\label{eq:wapo_objective}
\mathcal{J}_{\mathrm{WAPO}}(\theta)
& \!=\!
\mathbb{E}_{x\sim\mathcal{D}}
\bigg[
\frac{1}{GT}
\sum_{i=1}^{G}
\sum_{j=1}^{T_i} A_i^+\cdot
\\[-0.2em]
&
\min(\rho_{ij}(\theta),1{+}\epsilon)
\;\Big|\; \exists\,i \text{ s.t. } A_i^+>0
\bigg].
\nonumber
\vspace{-2pt}
\end{align}
Note that the principled policy gradient in Equation~\ref{eq:wapo_success_gradient} does not normalize by the individual sequence length, $T_i$. We adapt this in Equation~\ref{eq:wapo_objective}, where we apply a uniform normalization using a configured maximum sequence length $T$. This approach is also supported by \citet{liuunderstanding}, who show that this mitigates inherent biases related to sequence length. 
Since \method{} keeps only \(A_{i}>0\) terms, the lower clipping bound in \Cref{eq:wapo_objective} is
inactive for the retained terms.

\paragraph{Relation to positive-only baselines.}
\method{} is related to positive-only methods such as PSR
\citep{zhusurprising} and RAFT++ \citep{xiong2025minimalist}, which also remove failed or non-winning completions. However, these methods are mainly formulated for binary winner-selection settings: PSR does not use a clipped group-relative policy-gradient objective, and RAFT++ uses per sequence normalization and does not use mean-centered group-relative advantages. This can give short successful completions disproportionate weight and bias training toward terse answer templates (Figure \ref{fig:numina_smol_sequence_norm_b}). In contrast, \method{} keeps the \GRPO-style advantage formulation and masks only by the sign of the advantage. Thus, in binary-reward settings it acts as a winner-only update, while in continuous-reward settings it naturally selects completions that score above the group baseline. Figure~\ref{fig:numina_smol_sequence_norm_a} compares PSR, RAFT++, and the two \method{} normalizations on \numina{} with \smol{} using exact match as the binary reward. Both \method{} variants outperform positive-only baselines.
\pgfplotsset{compat=1.18}

\definecolor{seqOneMinusQ}{RGB}{128, 177, 211}
\definecolor{seqOneMinusQOverQ}{RGB}{251, 128, 114}
\definecolor{seqRaft}{RGB}{190, 186, 218} 
\definecolor{seqPsr}{RGB}{253, 180, 98}

\pgfplotstableread[col sep=comma]{plots/data/numina_smol_mask_neg_grp_new.csv}\numinaSmolOneMinusQ
\pgfplotstableread[col sep=comma]{plots/data/numina_smol_mask_neg_seq_new.csv}\numinaSmolOneMinusQOverQ
\pgfplotstableread[col sep=comma]{plots/data/numina_smol_raftpp.csv}\numinaSmolRaft
\pgfplotstableread[col sep=comma]{plots/data/numina_smol_psr.csv}\numinaSmolPsr

\begin{figure}[t]
\centering
\hspace*{-0.3cm}
\begin{tikzpicture}
    \begin{axis}[
        name=rewardplot,
        width=0.98\linewidth,
        height=0.55\linewidth,
        grid=both,
        grid style={line width=.1pt, draw=gray!25},
        major grid style={line width=.2pt, draw=gray!35},
        xlabel={Steps},
        ylabel={Reward},
        label style={font=\small},
        tick label style={font=\footnotesize},
        title={(a) Positive-only ablations on \numina{}},
        title style={font=\small\bfseries, yshift=-2pt},
        legend style={
            font=\scriptsize,
            draw=none,
            fill=white,
            fill opacity=0.85,
            text opacity=1,
            at={(0.5, -0.32)},
            anchor=north,
            legend columns=4,
            /tikz/every even column/.append style={column sep=0.3cm}
        },
        filter discard warning=false
    ]
        \addplot[color=seqOneMinusQ, line width=1.1pt, no marks, smooth, each nth point=3]
            table [x=step, y=reward] {\numinaSmolOneMinusQ};
        \addlegendentry{$1-q_x$}

        \addplot[color=seqOneMinusQOverQ, line width=1.1pt, no marks, smooth, each nth point=3]
            table [x=step, y=reward] {\numinaSmolOneMinusQOverQ};
        \addlegendentry{$(1-q_x)/q_x$}

        \addplot[color=seqRaft, line width=1.1pt, no marks, dashed, dash pattern=on 3pt off 2pt, smooth, each nth point=3]
            table [x=step, y=reward] {\numinaSmolRaft};
        \addlegendentry{RAFT++}

        \addplot[color=seqPsr, line width=1.1pt, no marks, dashed, dash pattern=on 3pt off 2pt, smooth, each nth point=3]
            table [x=step, y=reward] {\numinaSmolPsr};
        \addlegendentry{PSR}
    \end{axis}
    \node[overlay, inner sep=0pt] at (rewardplot.north west) {%
        \phantomsubcaption\label{fig:numina_smol_sequence_norm_a}%
    };

    \node[
        anchor=north,
        yshift=-1.6cm,
        fill=red!5,
        draw=red!40,
        rounded corners=2pt,
        inner sep=4pt,
        text width=0.98\linewidth,
        align=left,
        font=\scriptsize
    ] at (rewardplot.south) {%
        \phantomsubcaption\label{fig:numina_smol_sequence_norm_b}%
        \textbf{(b) Sequence-level normalization short-answer bias.}
        RAFT++ and PSR normalize at the sequence level, giving each selected
        completion comparable mass regardless of its token length. In late
        RAFT++ rollouts, generations often collapse to terse answer templates:
        \vspace{1pt}

        {\ttfamily
        <think> Okay, the answer is 3. So Answer: 3. </think> Answer: 3
        }
    };
\end{tikzpicture}
\vspace{-12pt}
\caption{Comparison of \method{} with RAFT++ and PSR on NuminaMath-LEAN using SmolLM3-3B. RAFT++ exhibits short-answer bias, while PSR plateaus without importance ratios.}
\label{fig:numina_smol_sequence_norm}
\vspace{-8pt}
\end{figure}

\section{Results}
\definecolor{vfBlue}{HTML}{0072B2}
\definecolor{vfOrange}{HTML}{E69F00}
\definecolor{vfRed}{HTML}{D55E00}
\definecolor{vfSky}{HTML}{56B4E9}
\definecolor{vfGray}{HTML}{6B7280}

\tikzset{
wapo panel label/.style={anchor=south east, align=right, fill=white, fill opacity=0.70, text opacity=1, inner sep=0.45pt, font=\fontsize{4.2pt}{4.5pt}\selectfont},
}

\makeatletter
\pgfdeclareplotmark{wapoCircle}{%
  \pgfpathcircle{\pgfpointorigin}{\pgfplotmarksize}%
  \pgfusepathqfillstroke%
}
\pgfdeclareplotmark{wapoOpenCircle}{%
  \pgfpathcircle{\pgfpointorigin}{\pgfplotmarksize}%
  \pgfusepathqstroke%
}
\pgfdeclareplotmark{wapoSquare}{%
  \pgfpathrectangle{\pgfqpoint{-\pgfplotmarksize}{-\pgfplotmarksize}}{\pgfqpoint{2\pgfplotmarksize}{2\pgfplotmarksize}}%
  \pgfusepathqfillstroke%
}
\pgfdeclareplotmark{wapoDiamond}{%
  \pgfpathmoveto{\pgfqpoint{0pt}{\pgfplotmarksize}}%
  \pgfpathlineto{\pgfqpoint{\pgfplotmarksize}{0pt}}%
  \pgfpathlineto{\pgfqpoint{0pt}{-\pgfplotmarksize}}%
  \pgfpathlineto{\pgfqpoint{-\pgfplotmarksize}{0pt}}%
  \pgfpathclose%
  \pgfusepathqfillstroke%
}
\pgfdeclareplotmark{wapoTriangle}{%
  \pgfpathmoveto{\pgfqpoint{0pt}{\pgfplotmarksize}}%
  \pgfpathlineto{\pgfqpoint{.866\pgfplotmarksize}{-.5\pgfplotmarksize}}%
  \pgfpathlineto{\pgfqpoint{-.866\pgfplotmarksize}{-.5\pgfplotmarksize}}%
  \pgfpathclose%
  \pgfusepathqfillstroke%
}
\pgfdeclareplotmark{wapoCross}{%
  \pgfpathmoveto{\pgfqpoint{-.85\pgfplotmarksize}{-.85\pgfplotmarksize}}%
  \pgfpathlineto{\pgfqpoint{.85\pgfplotmarksize}{.85\pgfplotmarksize}}%
  \pgfpathmoveto{\pgfqpoint{-.85\pgfplotmarksize}{.85\pgfplotmarksize}}%
  \pgfpathlineto{\pgfqpoint{.85\pgfplotmarksize}{-.85\pgfplotmarksize}}%
  \pgfusepathqstroke%
}
\makeatother

\pgfplotsset{
wapo eval axis/.style={
width=0.332\linewidth,
height=0.205\linewidth,
xlabel={},
ylabel={Score},
xlabel style={font=\scriptsize, inner sep=0pt},
ylabel style={font=\scriptsize, inner sep=1pt},
tick label style={font=\scriptsize},
ymajorgrids=true,
xmajorgrids=true,
grid style={draw=vfGray!14, line width=0.2pt},
axis line style={vfGray!75},
tick style={vfGray!75},
line width=0.65pt,
mark size=1.0pt,
xtick distance=100,
scaled ticks=false,
every axis plot/.append style={line width=0.65pt},
},
wapo drdapo plot/.style={mark=wapoDiamond, color=vfSky},
wapo grpo plot/.style={mark=wapoCircle, color=vfBlue},
wapo gspo plot/.style={mark=wapoSquare, color=vfOrange},
wapo method plot/.style={mark=wapoTriangle, color=vfRed},
wapo base model plot/.style={mark=wapoOpenCircle, color=vfGray},
wapo unmask plot/.style={mark=wapoCross, color=vfRed},
wapo all plot/.style={mark=wapoTriangle, color=vfRed, dashed},
}

\begin{figure*}[t]
\centering
\setlength{\tabcolsep}{0.6pt}
\renewcommand{\arraystretch}{0.68}
\begin{tikzpicture}[baseline=-0.5ex]
\draw[color=vfSky, line width=0.65pt] (0.00,0) -- (0.34,0);
\fill[color=vfSky] (0.17,0.055) -- (0.225,0) -- (0.17,-0.055) -- (0.115,0) -- cycle;
\node[anchor=west, font=\scriptsize] at (0.40,0) {DAPO};
\draw[color=vfBlue, line width=0.65pt] (1.70,0) -- (2.04,0);
\fill[color=vfBlue] (1.87,0) circle[radius=0.035];
\node[anchor=west, font=\scriptsize] at (2.10,0) {GRPO};
\draw[color=vfOrange, line width=0.65pt] (3.12,0) -- (3.46,0);
\fill[color=vfOrange] (3.25,-0.04) rectangle (3.33,0.04);
\node[anchor=west, font=\scriptsize] at (3.52,0) {GSPO};
\draw[color=vfRed, line width=0.65pt] (4.54,0) -- (4.88,0);
\fill[color=vfRed] (4.71,0.05) -- (4.66,-0.04) -- (4.76,-0.04) -- cycle;
\node[anchor=west, font=\scriptsize] at (4.94,0) {\method};
\end{tikzpicture}
\begin{tabular}{ccc}
\scriptsize\textbf{Qwen} & \scriptsize\textbf{SmolLM3} & \scriptsize\textbf{Gemma} \\[-0.4mm]
\llap{\raisebox{0.07\linewidth}{\rotatebox[origin=c]{90}{\scriptsize\textbf{OTT-QA}}}\hspace{1mm}}%
\begin{tikzpicture}
\begin{axis}[wapo eval axis, height=0.178\linewidth, xmin=0, xmax=600, ymin=-4, ymax=42, ylabel={EM Score}]
\addplot[wapo drdapo plot] coordinates {(0,14.40) (100,0.20)};
\addplot[wapo grpo plot] coordinates {(0,14.40) (100,16.20) (200,20.00) (300,17.60) (400,20.40) (500,17.90) (600,18.10)};
\addplot[wapo gspo plot] coordinates {(0,14.40) (100,26.60) (200,19.60) (300,18.20) (400,19.00) (500,21.10) (600,21.00)};
\addplot[wapo method plot] coordinates {(0,14.40) (100,29.00) (200,33.00) (300,35.30) (400,35.20) (500,37.10) (600,37.30)};
\end{axis}
\end{tikzpicture} &
\begin{tikzpicture}
\begin{axis}[wapo eval axis, height=0.178\linewidth, xmin=0, xmax=600, ymin=5, ymax=36, ylabel={}]
\addplot[wapo drdapo plot] coordinates {(0,8.30) (100,19.70) (200,24.90) (300,27.40) (400,29.70) (500,32.80) (600,32.20)};
\addplot[wapo grpo plot] coordinates {(0,8.30) (100,16.70) (200,22.80) (300,25.10) (400,30.20) (500,31.10) (600,30.40)};
\addplot[wapo gspo plot] coordinates {(0,8.30) (100,8.30) (200,10.10) (300,9.60) (400,11.80) (500,11.80) (600,11.90)};
\addplot[wapo method plot] coordinates {(0,8.30) (100,16.20) (200,22.90) (300,21.40) (400,28.60) (500,32.20) (600,28.20)};
\end{axis}
\end{tikzpicture} &
\begin{tikzpicture}
\begin{axis}[wapo eval axis, height=0.178\linewidth, xmin=0, xmax=600, ymin=-2, ymax=21, ylabel={}]
\addplot[wapo drdapo plot] coordinates {(0,0.80) (100,3.90) (200,4.60) (300,4.60) (400,4.30) (500,4.20) (600,4.40)};
\addplot[wapo grpo plot] coordinates {(0,0.80) (100,4.0) (200,3.50) (300,5.50) (400,7.40) (500,6.80) (600,6.50)};
\addplot[wapo gspo plot] coordinates {(0,0.80) (100,10.70) (200,8.90) (300,9.30) (400,9.0) (500,9.80) (600,10.30)};
\addplot[wapo method plot] coordinates {(0,0.80) (100,3.40) (200,5.0) (300,11.50) (400,14.10) (500,15.30) (600,13.70)};
\end{axis}
\end{tikzpicture} \\
\llap{\raisebox{0.07\linewidth}{\rotatebox[origin=c]{90}{\scriptsize\textbf{HotpotQA}}}\hspace{1mm}}%
\begin{tikzpicture}
\begin{axis}[wapo eval axis, height=0.178\linewidth, xmin=0, xmax=600, ymin=-2, ymax=51, ylabel={EM Score}]
\addplot[wapo drdapo plot] coordinates {(0,15.20) (100,3.00) (200,18.40) (300, 0)};
\addplot[wapo grpo plot] coordinates {(0,15.20) (100,25.20) (200,25.90) (300,30.70) (400,32.50) (500,33.50) (600,33.50)};
\addplot[wapo gspo plot] coordinates {(0,15.20) (100,38.70) (200,37.70) (300,38.40) (400,43.10) (500,39.00) (600,40.80)};
\addplot[wapo method plot] coordinates {(0,15.20) (100,39.30) (200,41.60) (300,43.90) (400,44.80) (500,46.10) (600,44.20)};
\end{axis}
\end{tikzpicture} &
\begin{tikzpicture}
\begin{axis}[wapo eval axis, height=0.178\linewidth, xmin=0, xmax=600, ymin=6, ymax=47, ylabel={}]
\addplot[wapo drdapo plot] coordinates {(0,20.10) (100,38.40) (200,40.40) (300,41.50) (400,9.80) (500,40.40) (600,40.30)};
\addplot[wapo grpo plot] coordinates {(0,20.10) (100,37.00) (200,40.40) (300,28.60) (400,31.60) (500,39.90) (600,43.50)};
\addplot[wapo gspo plot] coordinates {(0,20.10) (100,33.60) (200,29.10) (300,31.80) (400,40.00) (500,30.90) (600,36.30)};
\addplot[wapo method plot] coordinates {(0,20.10) (100,34.80) (200,38.70) (300,39.50) (400,40.00) (500,43.60) (600,43.10)};
\end{axis}
\end{tikzpicture} &
\begin{tikzpicture}
\begin{axis}[wapo eval axis, height=0.178\linewidth, xmin=0, xmax=600, ymin=-4, ymax=40, ylabel={}]
\addplot[wapo drdapo plot] coordinates {(0,11.80) (100,20.70) (200,19.40) (300,0.00) (400,0.00)};
\addplot[wapo grpo plot] coordinates {(0,11.80) (100,19.50) (200,19.30) (300,20.80) (400,17.20) (500,16.90) (600,16.40)};
\addplot[wapo gspo plot] coordinates {(0,11.80) (100,19.50) (200,20.00) (300,22.70) (400,24.90) (500,28.10) (600,26.60)};
\addplot[wapo method plot] coordinates {(0,11.80) (100,21.40) (200,28.40) (300,29.10) (400,31.80) (500,35.70) (600,32.90)};
\end{axis}
\end{tikzpicture} \\
\llap{\raisebox{0.07\linewidth}{\rotatebox[origin=c]{90}{\scriptsize\textbf{MATH-500}}}\hspace{1mm}}%
\begin{tikzpicture}
\begin{axis}[wapo eval axis, height=0.178\linewidth, xmin=0, xmax=600, ymin=2, ymax=88, ylabel={Accuracy}]
\addplot[wapo drdapo plot] coordinates {(0,9.80) (100,73.20) (200,77.00) (300,77.40) (400,78.20) (500,76.80) (600,78.60)};
\addplot[wapo grpo plot] coordinates {(0,9.80) (100,74.20) (200,78.40) (300,77.40) (400,79.80) (500,80.20) (600,79.80)};
\addplot[wapo gspo plot] coordinates {(0,9.80) (100,72.80) (200,75.00) (300,78.60) (400,77.40) (500,78.00) (600,76.00)};
\addplot[wapo method plot] coordinates {(0,9.80) (100,72.00) (200,72.80) (300,75.80) (400,77.20) (500,79.00) (600,80.60)};
\end{axis}
\end{tikzpicture} &
\begin{tikzpicture}
\begin{axis}[wapo eval axis, height=0.178\linewidth, xmin=0, xmax=600, ymin=11, ymax=83, ylabel={}]
\addplot[wapo drdapo plot] coordinates {(0,17.00) (100,70.60) (200,70.60) (300,72.60) (400,71.20) (500,72.80) (600,75.40)};
\addplot[wapo grpo plot] coordinates {(0,17.00) (100,71.60) (200,73.00) (300,74.00) (400,73.80) (500,74.80) (600,75.00)};
\addplot[wapo gspo plot] coordinates {(0,17.00) (100,70.80) (200,73.40) (300,72.40) (400,73.20) (500,76.80) (600,75.80)};
\addplot[wapo method plot] coordinates {(0,17.00) (100,69.60) (200,71.20) (300,73.00) (400,72.40) (500,74.60) (600,74.80)};
\end{axis}
\end{tikzpicture} &
\begin{tikzpicture}
\begin{axis}[wapo eval axis, height=0.178\linewidth, xmin=0, xmax=600, ymin=0, ymax=76, ylabel={}]
\addplot[wapo drdapo plot] coordinates {(0,41.60) (100,65.60) (200,66.00) (300,66.80) (400,0.00)};
\addplot[wapo grpo plot] coordinates {(0,41.60) (100,63.00) (200,64.40) (300,64.00) (400,67.40) (500,68.60) (600,66.00)};
\addplot[wapo gspo plot] coordinates {(0,41.60) (100,63.60) (200,67.00) (300,66.40) (400,65.40) (500,67.20) (600,66.80)};
\addplot[wapo method plot] coordinates {(0,41.60) (100,64.20) (200,64.80) (300,68.20) (400,66.80) (500,66.80) (600,68.40)};
\end{axis}
\end{tikzpicture} \\
\llap{\raisebox{0.07\linewidth}{\rotatebox[origin=c]{90}{\scriptsize\textbf{NuminaMath}}}\hspace{1mm}}%
\begin{tikzpicture}
\begin{axis}[wapo eval axis, height=0.178\linewidth, xmin=0, xmax=500, ymin=19, ymax=71, ylabel={Accuracy}]
\addplot[wapo drdapo plot] coordinates {(0,23.30) (100,59.30) (200,60.80) (300,63.00) (400,64.10) (500,62.20)};
\addplot[wapo grpo plot] coordinates {(0,23.30) (100,58.80) (200,63.70) (300,65.20) (400,65.90) (500,65.90)};
\addplot[wapo gspo plot] coordinates {(0,23.30) (100,57.40) (200,61.00) (300,63.00) (400,63.20) (500,65.30)};
\addplot[wapo method plot] coordinates {(0,23.30) (100,49.10) (200,55.0) (300,61.00) (400,63.70) (500,61.20)};
\end{axis}
\end{tikzpicture} &
\begin{tikzpicture}
\begin{axis}[wapo eval axis, height=0.178\linewidth, xmin=0, xmax=500, ymin=1, ymax=65, ylabel={}]
\addplot[wapo drdapo plot] coordinates {(0,6.80) (100,53.40) (200,55.20) (300,56.10) (400,57.40) (500,58.40)};
\addplot[wapo grpo plot] coordinates {(0,6.80) (100,52.90) (200,56.00) (300,57.40) (400,56.20) (500,58.00)};
\addplot[wapo gspo plot] coordinates {(0,6.80) (100,50.50) (200,54.90) (300,56.60) (400,58.50) (500,59.10)};
\addplot[wapo method plot] coordinates {(0,6.80) (100,49.00) (200,51.30) (300,55.30) (400,56.80) (500,56.60)};
\end{axis}
\end{tikzpicture} &
\begin{tikzpicture}
\begin{axis}[wapo eval axis, height=0.178\linewidth, xmin=0, xmax=500, ymin=0, ymax=49, ylabel={}]
\addplot[wapo drdapo plot] coordinates {(0,31.10) (100,41.50) (200,42.50) (300,43.60) (400,0)};
\addplot[wapo grpo plot] coordinates {(0,31.10) (100,39.00) (200,42.70) (300,42.00) (400,44.00) (500,14.70)};
\addplot[wapo gspo plot] coordinates {(0,31.10) (100,42.90) (200,46.20) (300,46.00) (400,44.30) (500,45.90)};
\addplot[wapo method plot] coordinates {(0,31.10) (100,41.50) (200,42.40) (300,42.20) (400,44.50) (500,42.30)};
\end{axis}
\end{tikzpicture} \\
\end{tabular}
\vspace{-6pt}
\caption{\textbf{Evaluation curves for exact match and accuracy.} Rows show datasets, columns show model families. Each subplot reports checkpoint performance up to 600 training steps, with step 0 initialized from the shared base model for math tasks or the SFT-coldstart model for multihop QA. Compared with DAPO, GRPO, and GSPO, \method{} is generally more stable and achieves stronger final performance across settings.}
\label{fig:eval_curves_em}
\end{figure*}
\begin{figure*}[!t]
\centering
\setlength{\tabcolsep}{0.6pt}
\renewcommand{\arraystretch}{0.74}
\begin{tikzpicture}[baseline=-0.5ex]
\draw[color=vfGray, line width=0.65pt] (0.00,0) -- (0.34,0);
\draw[color=vfGray, line width=0.45pt] (0.17,0) circle[radius=0.035];
\node[anchor=west, font=\scriptsize] at (0.40,0) {Base};
\draw[color=vfSky, line width=0.65pt] (1.30,0) -- (1.64,0);
\fill[color=vfSky] (1.47,0.055) -- (1.525,0) -- (1.47,-0.055) -- (1.415,0) -- cycle;
\node[anchor=west, font=\scriptsize] at (1.70,0) {DAPO};
\draw[color=vfBlue, line width=0.65pt] (2.60,0) -- (2.94,0);
\fill[color=vfBlue] (2.77,0) circle[radius=0.035];
\node[anchor=west, font=\scriptsize] at (3.00,0) {GRPO};
\draw[color=vfOrange, line width=0.65pt] (4.02,0) -- (4.36,0);
\fill[color=vfOrange] (4.15,-0.04) rectangle (4.23,0.04);
\node[anchor=west, font=\scriptsize] at (4.42,0) {GSPO};
\draw[color=vfRed, line width=0.65pt] (5.44,0) -- (5.78,0);
\fill[color=vfRed] (5.61,0.05) -- (5.56,-0.04) -- (5.66,-0.04) -- cycle;
\node[anchor=west, font=\scriptsize] at (5.84,0) {WAPO};
\end{tikzpicture}
\begin{tabular}{ccc}
\scriptsize\textbf{Qwen} & \scriptsize\textbf{SmolLM3} & \scriptsize\textbf{Gemma} \\[-0.4mm]
\llap{\raisebox{0.07\linewidth}{\rotatebox[origin=c]{90}{\scriptsize\textbf{HotpotQA}}}\hspace{1mm}}%
\begin{tikzpicture}
\begin{axis}[wapo eval axis, xmin=0, xmax=32, ymin=10, ymax=68, ylabel={EM Score}, xtick={1,2,4,8,16,32}]
\addplot[wapo base model plot] coordinates {(1,13.71) (2,19.03) (4,24.51) (8,29.62) (16,34.27) (32,38.50)};
\addplot[wapo drdapo plot] coordinates {(1,16.56) (2,19.18) (4,21.36) (8,23.18) (16,24.86) (32,26.60)};
\addplot[wapo grpo plot] coordinates {(1,30.69) (2,33.69) (4,36.18) (8,38.27) (16,40.04) (32,41.60)};
\addplot[wapo gspo plot] coordinates {(1,41.30) (2,47.25) (4,51.95) (8,55.88) (16,59.00) (32,61.10)};
\addplot[wapo method plot] coordinates {(1,45.84) (2,51.52) (4,55.83) (8,59.19) (16,61.87) (32,64.10)};
\end{axis}
\end{tikzpicture} &
\begin{tikzpicture}
\begin{axis}[wapo eval axis, xmin=0, xmax=32, ymin=15, ymax=68, ylabel={}, xtick={1,2,4,8,16,32}]
\addplot[wapo base model plot] coordinates {(1,18.84) (2,25.67) (4,32.45) (8,38.90) (16,44.85) (32,50.00)};
\addplot[wapo drdapo plot] coordinates {(1,42.01) (2,47.35) (4,51.72) (8,55.40) (16,58.55) (32,61.40)};
\addplot[wapo grpo plot] coordinates {(1,42.44) (2,47.80) (4,52.01) (8,55.60) (16,58.74) (32,61.30)};
\addplot[wapo gspo plot] coordinates {(1,38.66) (2,44.44) (4,49.39) (8,53.54) (16,57.03) (32,60.20)};
\addplot[wapo method plot] coordinates {(1,43.62) (2,49.65) (4,54.39) (8,58.40) (16,62.03) (32,65.30)};
\end{axis}
\end{tikzpicture} &
\begin{tikzpicture}
\begin{axis}[wapo eval axis, xmin=0, xmax=32, ymin=8, ymax=64, ylabel={}, xtick={1,2,4,8,16,32}]
\addplot[wapo base model plot] coordinates {(1,11.08) (2,15.27) (4,19.43) (8,23.42) (16,27.13) (32,30.70)};
\addplot[wapo drdapo plot] coordinates {(1,19.72) (2,24.35) (4,28.47) (8,32.07) (16,35.52) (32,39.30)};
\addplot[wapo grpo plot] coordinates {(1,21.09) (2,22.18) (4,23.08) (8,23.82) (16,24.48) (32,25.00)};
\addplot[wapo gspo plot] coordinates {(1,24.36) (2,30.12) (4,35.00) (8,39.22) (16,42.83) (32,46.10)};
\addplot[wapo method plot] coordinates {(1,34.91) (2,42.14) (4,47.87) (8,52.40) (16,56.05) (32,59.00)};
\end{axis}
\end{tikzpicture} \\
\llap{\raisebox{0.07\linewidth}{\rotatebox[origin=c]{90}{\scriptsize\textbf{OTT-QA}}}\hspace{1mm}}%
\begin{tikzpicture}
\begin{axis}[wapo eval axis, xmin=0, xmax=32, ymin=10, ymax=70, ylabel={EM Score}, xtick={1,2,4,8,16,32}]
\addplot[wapo base model plot] coordinates {(1,14.82) (2,21.66) (4,29.15) (8,36.50) (16,43.18) (32,48.90)};
\addplot[wapo grpo plot] coordinates {(1,20.70) (2,24.44) (4,27.87) (8,30.92) (16,33.72) (32,36.50)};
\addplot[wapo gspo plot] coordinates {(1,27.57) (2,35.98) (4,43.60) (8,50.09) (16,55.82) (32,61.10)};
\addplot[wapo method plot] coordinates {(1,37.44) (2,45.42) (4,52.19) (8,57.93) (16,62.91) (32,67.5)};
\end{axis}
\end{tikzpicture} &
\begin{tikzpicture}
\begin{axis}[wapo eval axis, xmin=0, xmax=32, ymin=5, ymax=70, ylabel={}, xtick={1,2,4,8,16,32}]
\addplot[wapo base model plot] coordinates {(1,9.11) (2,14.41) (4,20.87) (8,28.10) (16,36.02) (32,44.30)};
\addplot[wapo drdapo plot] coordinates {(1,32.63) (2,41.43) (4,49.29) (8,56.03) (16,61.68) (32,66.40)};
\addplot[wapo grpo plot] coordinates {(1,31.27) (2,39.78) (4,47.56) (8,54.31) (16,60.07) (32,65.20)};
\addplot[wapo gspo plot] coordinates {(1,11.76) (2,16.26) (4,21.29) (8,26.59) (16,31.81) (32,36.70)};
\addplot[wapo method plot] coordinates {(1,29.59) (2,38.71) (4,46.87) (8,53.96) (16,60.29) (32,65.60)};
\end{axis}
\end{tikzpicture} &
\begin{tikzpicture}
\begin{axis}[wapo eval axis, xmin=0, xmax=32, ymin=0, ymax=46, ylabel={}, xtick={1,2,4,8,16,32}]
\addplot[wapo base model plot] coordinates {(1,1.46) (2,2.42) (4,3.76) (8,5.60) (16,8.19) (32,11.80)};
\addplot[wapo drdapo plot] coordinates {(1,4.67) (2,6.64) (4,9.13) (8,12.25) (16,15.96) (32,19.70)};
\addplot[wapo grpo plot] coordinates {(1,6.62) (2,10.08) (4,14.33) (8,19.10) (16,23.85) (32,28.10)};
\addplot[wapo gspo plot] coordinates {(1,10.50) (2,12.00) (4,13.47) (8,14.85) (16,16.13) (32,17.30)};
\addplot[wapo method plot] coordinates {(1,13.67) (2,19.40) (4,25.59) (8,31.77) (16,37.55) (32,42.60)};
\end{axis}
\end{tikzpicture} \\
\end{tabular}
\vspace{-6pt}
\caption{\textbf{HotpotQA and OTT-QA pass@$k$ curves.} Each sub-figure plots pass@$k$ EM score as a function of $k$ for one model family on the best evaluation checkpoint. \method\ not only outperforms baselines in pass@1, but also shows good pass@k results. \numina{} pass@$k$ results are shown in appendix.}
\label{fig:hotpot_ott_passk_curves}
\vspace{-4pt}
\end{figure*}

\label{sec:experiments}
We conduct extensive experiments across mathematical reasoning and multi-hop QA,
providing a well-rounded evaluation across different task formats and model
families. Specifically, we evaluate on \datamath{} and \numina{} for mathematical reasoning, and \hotpot{} and \ott{} for multi-hop QA. In the
single-turn math setting, the model directly produces a final answer, while in
the multi-turn QA setting, the model interacts with a search environment before
answering. We test three comparable-scale model families, \qwen{}, \smol{},
and \gemma{}, and compare \method{} against popular RLVR baselines,
including \GRPO{}, \DAPO{}, and \GSPO{} where each baseline is tuned to a strong-performing configuration under our experimental setup. For \GRPO{} we find $\epsilon=9.0$ works best, while for \DAPO{} we set $\epsilon_{\text{low}}=0.9$, $\epsilon_{\text{high}}=9.0$, and drop advantage normalization similar to \citet{liuunderstanding}. In the case of \GSPO{}, since it is sequence normalized it is more robust to outliers and we see the best results with $\epsilon = 0.2$. We evaluate performance using exact-match (EM)
accuracy, training stability, pass@k for measuring preservation of
exploratory behavior, and out-of-domain generalization. Dataset preparation and
training details are provided in Appendices~\ref{app:math-env},
\ref{app:qa-env}, and~\ref{app:hyperparams}.

\paragraph{\method{} improves stability across tasks and model families, with large gains on multi-hop QA and competitive math performance.}
Figure~\ref{fig:eval_curves_em} reports checkpoint performance across four datasets and three model families. \method{} remains stable across all evaluated settings, whereas \DAPO{} exhibits collapse in several cases, and GRPO collapses with \gemma{} on the \numina{} dataset. \GRPO{} and \GSPO{} are generally more stable than \DAPO{}, but often saturate early, especially on the multi-hop QA tasks. In contrast, \method{} remains stable and continues to improve through training, suggesting that removing non-positive advantage updates reduces a major source of RLVR instability without hurting learning capability.

\begin{table}[t]
\centering
\footnotesize
\setlength{\tabcolsep}{2.2pt}
\renewcommand{\arraystretch}{0.88}
\resizebox{\columnwidth}{!}{%
\begin{tabular}{@{}c@{\hspace{0.6em}}c@{}}
\begin{tabular}[t]{@{}lccc@{}}
\multicolumn{4}{@{}c@{}}{\textbf{(a) \hotpot $\rightarrow$ \wikidata}} \\
\toprule
Method & Qwen & Smol & Gemma \\
\midrule
\DAPO    & 13.3 & 30.3 & 22.0 \\
\GRPO    & 26.6 & 28.7 & 23.3 \\
\GSPO    & 29.7 & 29.8 & 24.4 \\
\method & \textbf{34.2} & \textbf{31.4} & \textbf{25.6} \\
\bottomrule
\end{tabular}
&
\begin{tabular}[t]{@{}lccc@{}}
\multicolumn{4}{@{}c@{}}{\textbf{(b) Numina $\rightarrow$ \aime}} \\
\toprule
Method & Qwen & Smol & Gemma \\
\midrule
\DAPO    & 39.06 & \textbf{27.18} & 13.75 \\
\GRPO    & \textbf{45.10} & 25.93 & 13.54 \\
\GSPO    & 42.08 & 26.04 & \textbf{14.27} \\
\method & 38.43 & 26.77 & 12.71 \\
\bottomrule
\end{tabular}
\end{tabular}
}
\caption{
\textbf{Out-of-domain evaluation.}
Best-checkpoint transfer from source to target datasets. We report exact match for \wikidata{} and average@32 for \aime{}.
}
\label{tab:ood_results}
\vspace{-10pt}
\end{table}

On \ott{} and \hotpot{}, \method{} consistently achieves strong performance across model families: on the \ott{} dataset, \method{} outperforms the next best stable baseline by 9.9\% for \qwen{} and 3.2\% for \gemma{}; similarly on the \hotpot{} dataset, the margins are 4.5\% with \qwen{} and 10.6\% with \gemma{} respectively. On \datamath{} and \numina{}, \method{} nearly matches other baselines, while slightly lagging on early checkpoints. One possible explanation is the adaptive factor $1-q_x$ used by \method{}, which emphasizes lower-success prompts and can yield more conservative updates early in training. This pattern is visible on \numina{} in Figure~\ref{fig:eval_curves_em}: both \smol{} and \gemma{} initially trail the baselines, but close the gap at later training steps. Overall, \method{} appears to be most beneficial on more difficult datasets, as indicated by lower EM score ranges across methods. For example \method{} performs best on \ott{} (hardest) and remains largely comparable to the strongest baselines on \datamath{} (easiest). Table \ref{tab:main_results} in Appendix \ref{app:supp_exp} includes exact numbers for the plots in \Cref{fig:eval_curves_em}.

\paragraph{\method{} preserves exploration.}
A potential concern is that removing negative-advantage updates may over-exploit
already discovered trajectories and reduce sampling diversity.  We report \hotpot{} and \ott{} pass@\(k\) curves at the best evaluated
checkpoint in Figure~\ref{fig:hotpot_ott_passk_curves}. \method{} achieves the best pass@1 and also maintains the strongest
pass@\(k\) performance across all three model families. We also observe competitive performance at higher pass@$k$ values on both
\datamath{} and \numina{}. For example, on \numina{} with \smol{},
\method{} achieves $81.02\%$ pass@$16$, matching \GSPO{}'s $81.0\%$. (see Appendix \ref{app:supp_exp}, \Cref{fig:math_numina_passk_curves_no_base}).

\begin{table}[t]
\vspace{-3pt}
\centering
\scriptsize
\setlength{\tabcolsep}{2.8pt}
\renewcommand{\arraystretch}{0.92}
\resizebox{\columnwidth}{!}{%
\begin{tabular}{llrrrrrr}
\toprule
Model & Method & p@1 & p@2 & p@4 & p@8 & p@16 & p@32 \\
\midrule
\qwen  & \GRPO     & \textbf{45.1} & \textbf{54.9} & \textbf{61.8} & 65.9 & 68.4 & 70.0 \\
\qwen  & \DAPO     & 39.1 & 47.5 & 55.2 & 62.0 & 69.7 & \textbf{80.0} \\
\qwen  & \GSPO     & 42.1 & 51.9 & 60.6 & \textbf{66.7} & 71.3 & 76.7 \\
\qwen  & \method{} & 36.4 & 45.1 & 54.3 & 63.8 & \textbf{71.5} & 76.7 \\
\midrule
\smol  & \GRPO     & 25.9 & 31.5 & 38.2 & 45.8 & 53.0 & 60.0 \\
\smol  & \DAPO     & \textbf{27.2} & 33.4 & 40.5 & 47.9 & 53.9 & 56.7 \\
\smol  & \GSPO     & 26.0 & 32.6 & 40.9 & 50.2 & 59.1 & \textbf{66.7} \\
\smol  & \method{} & 25.9 & \textbf{33.5} & \textbf{42.9} & \textbf{53.5} & \textbf{61.7} & \textbf{66.7} \\
\midrule
\gemma & \GRPO     & 13.5 & 16.6 & 20.3 & 24.4 & \textbf{28.9} & \textbf{33.3} \\
\gemma & \DAPO     & 13.8 & 17.1 & 20.5 & 23.5 & 26.3 & 30.0 \\
\gemma & \GSPO     & \textbf{14.3} & \textbf{17.6} & \textbf{21.0} & \textbf{24.3} & 27.4 & 30.0 \\
\gemma & \method{} & 12.4 & 15.9 & 19.6 & 23.4 & 25.5 & 30.0 \\
\bottomrule
\end{tabular}%
}
\vspace{-4pt}
\caption{Out-of-domain transfer from NuminaMath-LEAN to \aime{} pass@\(k\) EM-results.}
\label{tab:aime-passk}
\vspace{-8pt}
\end{table}

\paragraph{\method{} shows robust out-of-domain transfer.}
Table~\ref{tab:ood_results} evaluates out-of-domain transfer from the training
distribution to a held-out target dataset. On \hotpot{}$\rightarrow$\wikidata{},
\method{} achieves the best performance across all three model families,
indicating that its gains on multi-hop QA are not specific to the training
benchmark. On NuminaMath-LEAN$\rightarrow$\aime{}, \method{} remains competitive
with RLVR baselines. The pass@$k$ results in Table~\ref{tab:aime-passk} show a
similar trend to the in-domain setting: although \method{} is not always strongest
at low $k$, it becomes increasingly competitive as $k$ grows, especially for
\qwen{} and \smol{}.

\section{Conclusion}
We presented a token-level gradient analysis of RLVR instability and introduced a peak--valley taxonomy for understanding how advantage-weighted updates shape
stability and exploration. Building on this analysis, we proposed \method{}, a
minimal \GRPO{}-style objective that removes non-positive advantage updates while
retaining online rollouts, group-normalized advantages, importance ratios, and
clipping. Across mathematical reasoning and multi-hop QA benchmarks, \method{}
improves training stability, preserves pass@\(k\), and outperforms strong RLVR
baselines in regimes where existing methods collapse or saturate early, while
remaining competitive in settings where those baselines are stable. Future work includes extending this study to additional verifiable domains, including programming tasks such as text-to-SQL~\citep{pourreza2024dts, gorti2025msc} and code generation~\citep{le2022coderl, chencodet}, as well as to larger model scales and MoE architectures.

\bibliography{custom}

\appendix

\clearpage

\section{Additional Collapse Examples}\label{app:additional_clip_eps_exps}

\pgfplotstableread[col sep=comma]{plots/data/ott_qwen_grpo_eps_9_0.csv}{\ottQwenGrpoNine}
\pgfplotstableread[col sep=comma]{plots/data/ott_qwen_grpo_eps_0_2.csv}{\ottQwenGrpoPointTwo}
\pgfplotstableread[col sep=comma]{plots/data/ott_gemma_drdapo_10_0_8.csv}{\ottGemmaDrDapoPointEight}
\pgfplotstableread[col sep=comma]{plots/data/ott_gemma_drdapo_10_0_1.csv}{\ottGemmaDrDapoPointOne}

\begin{figure*}[t!]
\centering
\begin{tikzpicture}
    \definecolor{cottqwengrpoepsnine}{RGB}{128,177,211}
    \definecolor{cottqwengrpoepspointtwo}{RGB}{251,128,114}
    \definecolor{cottgemmadrdapopointeight}{RGB}{128,177,211}
    \definecolor{cottgemmadrdapopointone}{RGB}{251,128,114}

    \begin{groupplot}[
        group style={
            group size=2 by 2,      
            horizontal sep=1.2cm,
            vertical sep=1.8cm,     
        },
        width=0.47\linewidth,
        height=0.30\linewidth,
        xmin=0,
        xmax=600,
        grid=both,
        grid style={line width=.1pt, draw=gray!25},
        major grid style={line width=.2pt, draw=gray!35},
        label style={font=\small},
        tick label style={font=\footnotesize},
        title style={font=\small\bfseries, yshift=-2pt},
        filter discard warning=false,
    ]

        \nextgroupplot[
            ylabel={F1 Reward},
            title={(a) Qwen GRPO: Reward},
            legend style={
                font=\scriptsize, draw=none, fill=white, fill opacity=0.8,
                text opacity=1, at={(0.05, 0.95)}, anchor=north west,
                legend columns=1, row sep=0.1cm
            }
        ]
        \addplot[color=cottqwengrpoepsnine, line width=1.1pt, no marks, smooth, each nth point=5]
            table [x=step, y=f1_reward] {\ottQwenGrpoNine};
        \addlegendentry{$\epsilon_{\text{low}}=9.0$}
        
        \addplot[color=cottqwengrpoepspointtwo, line width=1.1pt, no marks, smooth, each nth point=5]
            table [x=step, y=f1_reward] {\ottQwenGrpoPointTwo};
        \addlegendentry{$\epsilon_{\text{low}}=0.2$}

        \nextgroupplot[
            ylabel={Entropy},
            title={(b) Qwen GRPO: Entropy},
            legend style={
                font=\scriptsize, draw=none, fill=white, fill opacity=0.8,
                text opacity=1, at={(0.05, 0.95)}, anchor=north west,
                legend columns=1, row sep=0.1cm
            }
        ]
        \addplot[color=cottqwengrpoepspointtwo, line width=1.1pt, no marks, smooth, each nth point=5]
            table [x=step, y=entropy] {\ottQwenGrpoPointTwo};
        \addlegendentry{$\epsilon_{\text{low}}=0.2$}
        
        \addplot[color=cottqwengrpoepsnine, line width=1.1pt, no marks, smooth, each nth point=5]
            table [x=step, y=entropy] {\ottQwenGrpoNine};
        \addlegendentry{$\epsilon_{\text{low}}=9.0$}

        \nextgroupplot[
            ylabel={F1 Reward},
            xlabel={Steps},
            x label style={yshift=0.20cm},
            title={(c) Gemma DAPO: Reward},
            legend style={
                font=\scriptsize, draw=none, fill=white, fill opacity=0.8,
                text opacity=1, at={(0.05, 0.95)}, anchor=north west,
                legend columns=1, row sep=0.1cm
            }
        ]
        \addplot[color=cottgemmadrdapopointone, line width=1.1pt, no marks, smooth, each nth point=5]
            table [x=step, y=f1_reward] {\ottGemmaDrDapoPointOne};
        \addlegendentry{$\epsilon_{\text{low}}=0.9$}
        
        \addplot[color=cottgemmadrdapopointeight, line width=1.1pt, no marks, smooth, each nth point=5]
            table [x=step, y=f1_reward] {\ottGemmaDrDapoPointEight};
        \addlegendentry{$\epsilon_{\text{low}}=0.2$}

        \nextgroupplot[
            ylabel={Entropy},
            xlabel={Steps},
            x label style={yshift=0.20cm},
            title={(d) Gemma DAPO: Entropy},
            legend style={
                font=\scriptsize, draw=none, fill=white, fill opacity=0.8,
                text opacity=1, at={(0.05, 0.95)}, anchor=north west,
                legend columns=1, row sep=0.1cm
            }
        ]
        \addplot[color=cottgemmadrdapopointeight, line width=1.1pt, no marks, smooth, each nth point=5]
            table [x=step, y=entropy] {\ottGemmaDrDapoPointEight};
        \addlegendentry{$\epsilon_{\text{low}}=0.2$}
        
        \addplot[color=cottgemmadrdapopointone, line width=1.1pt, no marks, smooth, each nth point=5]
            table [x=step, y=entropy] {\ottGemmaDrDapoPointOne};
        \addlegendentry{$\epsilon_{\text{low}}=0.9$}

    \end{groupplot}
\end{tikzpicture}

\caption{F1-reward and entropy trajectories on OTT-QA under varying negative clipping thresholds. \textbf{Top (\subref{fig:qwen_reward}, \subref{fig:qwen_entropy}):} \qwen{} trained with GRPO. \textbf{Bottom (\subref{fig:gemma_reward}, \subref{fig:gemma_entropy}):} \gemma{} trained with DAPO. Larger $\epsilon_{\text{low}}$ indicates more aggressive clipping (red), which surpresses divergent negative tokens, but does not always improve training stability while reducing entropy.}\label{fig:qwen_gemma_clip_exps}

\phantomsubcaption\label{fig:qwen_reward}
\phantomsubcaption\label{fig:qwen_entropy}
\phantomsubcaption\label{fig:gemma_reward}
\phantomsubcaption\label{fig:gemma_entropy}

\end{figure*}
We show additional examples of aggressive clipping causing collapse in \Cref{fig:qwen_gemma_clip_exps}. As compared to \Cref{fig:dapo_smol_analysis}, here we use \qwen{}~\cite{yang2025qwen3} trained with GRPO~\cite{deng2025grpo}, and \gemma{}~\cite{gemmateam2025gemma3technicalreport} trained with DAPO~\cite{yu2025dapo}. In both settings, more aggressive clipping suppresses a larger fraction of divergent tokens but does not improve stability. This highlights that RLVR instability is not explained solely by the trust region.

\section{Proofs for the Gradient-Motivated Taxonomy}
\label{app:gradient-token-taxonomy}

\subsection{First-Order Gradient Prediction}

Here we derive the result in \Cref{eq:delta_prob}.

\textbf{Fact:} Denote logits as $z$, the softmax probability of the sampled token as $p_s$, and similarly $p_i$ for a non-sampled token. A small gradient-descent step with step size $\eta>0$ on the advantage-weighted negative log-likelihood $\ell_s(z)=-A\log p_s$ changes the probability of $p_i$ as
\begin{equation}
    \Delta p_i = \eta A p_i\left(C(p)-p_s-p_i\right) +O(\eta^2),
\end{equation}
where $C(p)=\sum_{j=1}^{V}p_j^2$.

\textit{Proof.} Let $z$ denote the initial logits and $z'$ denote the updated logits after a single gradient descent step. The gradient of $\ell_s(z)$ with respect to an arbitrary logit $z_k$ is computed as:
\begin{equation}
    \frac{\partial \ell_s}{\partial z_k} = -A \frac{1}{p_s} \frac{\partial p_s}{\partial z_k} = A (p_k - \delta_{sk}),
\end{equation}
where $\delta_{sk}$ is the Kronecker delta ($\delta_{sk}=1$ if $s=k$, and $0$ otherwise). Applying the gradient descent update with step size $\eta$, the change in the logits is:
\begin{equation}
    \Delta z_k = z'_k - z_k = -\eta A (p_k - \delta_{sk}).
\end{equation}
To find the resulting change in the probability distribution $\Delta p_i$ for a non-sampled token $i \neq s$, we perform a first-order Taylor expansion of $p_i(z)$ around $z$:
\begin{equation}
    \Delta p_i = \sum_{k \in [V]} \frac{\partial p_i}{\partial z_k} \Delta z_k + O(\eta^2).
\end{equation}
Recall that the Jacobian of the softmax function is given by $\frac{\partial p_i}{\partial z_k} = p_i(\delta_{ik} - p_k)$. Substituting this and our logit update $\Delta z_k$ into the expansion yields:
\begin{equation}
\begin{split}
    \Delta p_i &= \sum_{k \in [V]} p_i(\delta_{ik} - p_k) \big(-\eta A (p_k - \delta_{sk})\big) \\
    &\quad + O(\eta^2) \\
    &= -\eta A p_i \sum_{k \in [V]} (\delta_{ik} - p_k)(p_k - \delta_{sk}) \\
    &\quad + O(\eta^2).
\end{split}
\end{equation}
We can distribute the terms inside the summation:
\begin{equation}
\begin{split}
    \sum_{k \in [V]} &(\delta_{ik} p_k - \delta_{ik}\delta_{sk} - p_k^2 + p_k\delta_{sk}) \\
    &= \sum_{k} \delta_{ik} p_k - \sum_{k} \delta_{ik}\delta_{sk} \\
    &\quad - \sum_{k} p_k^2 + \sum_{k} p_k\delta_{sk}.
\end{split}
\end{equation}
Since we are evaluating this for a non-sampled token ($i \neq s$), the term $\delta_{ik}\delta_{sk}$ is uniformly zero for all $k$. We define the sum of squared probabilities as $C(p) = \sum_{k \in [V]} p_k^2$. Evaluating the remaining sums, we obtain:
\begin{equation}
    p_i - 0 - C(p) + p_s.
\end{equation}
Substituting this simplified summation back into our equation for $\Delta p_i$ provides the final result:
\begin{equation}
\begin{split}
    \Delta p_i &= -\eta A p_i \big(p_i - C(p) + p_s\big) + O(\eta^2) \\
    &= \eta A p_i \big(C(p) - p_s - p_i\big) + O(\eta^2),
\end{split}
\end{equation}
which completes the proof. \hfill \qedsymbol

\subsection{Extremal Tokens and the Reference Level}

Next we justify the order properties described in \Cref{sec:gradient-token-taxonomy}.

\textbf{Fact:} The reference level $C(p) = \sum_{k \in [V]} p_k^2$ satisfies the following order properties: (1) $p_{\max} \ge C(p)$ 
and (2) $p_{\min} \le C(p)$ with equality only for a uniform distribution.

\textit{Proof.} Because $p$ is a valid probability distribution, we have $\sum_{k \in [V]} p_k = 1$ and $p_k \ge 0$ for all $k$. We prove each of the three properties in turn.

\textbf{Property 1: $p_{\max} \ge C(p)$.} \\
By definition, $p_k \le p_{\max}$ for all $k \in [V]$. Multiplying both sides by the non-negative probability $p_k$ yields $p_k^2 \le p_{\max} p_k$. Summing this inequality over the entire vocabulary gives:
\begin{equation}
\begin{split}\label{eq:prop_1}
    C(p) &= \sum_{k \in [V]} p_k^2 \le \sum_{k \in [V]} p_{\max} p_k \\
    &= p_{\max} \sum_{k \in [V]} p_k = p_{\max}.
\end{split}
\end{equation}
Thus, $p_{\max} \ge C(p)$. It follows immediately that any token attaining the maximum probability satisfies this condition, making it a $\peak$.


\textbf{Property 2: $p_{\min} \le C(p)$.} \\
Applying the same logic as the Property 1, we note that $p_k \ge p_{\min}$ for all $k$ in the support of $p$. Multiplying by $p_k$ and summing over $k$ yields:
\begin{equation}
\begin{split}
    C(p) &= \sum_{k \in [V]} p_k^2 \ge \sum_{k \in [V]} p_{\min} p_k \\
    &= p_{\min} \sum_{k \in [V]} p_k = p_{\min}.
\end{split}
\end{equation}
This establishes that $p_{\min} \le C(p)$. For equality to hold, $C(p) - p_{\min} = 0$, which implies:
\begin{equation}
    \sum_{k \in [V]} p_k(p_k - p_{\min}) = 0.
\end{equation}
Since $p_k \ge 0$ and $p_k - p_{\min} \ge 0$ for all $k$, every term in the summation must independently be zero. This requires $p_k = p_{\min}$ for every token in the support, which only occurs when the distribution $p$ is strictly uniform. Therefore, a minimum-probability token is always a $\valley$, except when the distribution is perfectly uniform. \hfill \qedsymbol

\subsection{First-Order Entropy Direction}
\label{app:entropy-direction}

Below we derive the change in entropy from \Cref{eq:delta_entropy}.

\textit{Proof of $\Delta H$ Update.} Let $H(p) = -\sum_{k \in [V]} p_k \log p_k$ be the entropy of the next-token distribution. The first-order Taylor expansion of the entropy change is:
\begin{equation}
    \Delta H = \sum_{k \in [V]} \frac{\partial H}{\partial p_k} \Delta p_k + O(\eta^2).
\end{equation}
Taking the derivative of $H(p)$, we have $\frac{\partial H}{\partial p_k} = -1 - \log p_k$. Because probability mass must be conserved ($\sum_k p_k = 1$), the sum of the changes is zero: $\sum_{k \in [V]} \Delta p_k = 0$. Thus, the entropy update simplifies to:
\begin{equation}
    \Delta H = -\sum_{k \in [V]} \Delta p_k \log p_k + O(\eta^2).
\end{equation}

From our earlier derivation, the general update for any token $k$ (including the sampled token $s$) is $\Delta p_k = \eta A p_k (C(p) - p_s - p_k + \delta_{sk})$. Substituting this into our simplified expression for $\Delta H$ yields:
\begin{equation}
\begin{split}
    \Delta H &= -\eta A \sum_{k \in [V]} p_k \log p_k \big(C(p) - p_s - p_k + \delta_{sk}\big) \\
    &\quad + O(\eta^2).
\end{split}
\end{equation}
We distribute the summation over the terms inside the parentheses to get:
\begin{equation}
\begin{split}
    \Delta H &= -\eta A \Bigg[ (C(p) - p_s) \sum_{k \in [V]} p_k \log p_k \\
    &\quad - \sum_{k \in [V]} p_k^2 \log p_k \\
    &\quad + \sum_{k \in [V]} p_k \log p_k \delta_{sk} \Bigg] + O(\eta^2).
\end{split}
\end{equation}
Recognizing that $\sum_{k} p_k \log p_k = -H(p)$ and applying the Kronecker delta to the final term, we arrive at the final expression after rearrangement:
\begin{equation}\label{eq:delta_entropy_proof}
\begin{split}
    \Delta H &= -\eta A \Bigg[ p_s \log p_s - \sum_{k \in [V]} p_k^2 \log p_k \\
    &\quad + H(p)\big(p_s - C(p)\big) \Bigg] + O(\eta^2).
\end{split}
\end{equation}
\hfill \qedsymbol

\vspace{1em}

\textit{Proof of Entropy Direction.} Let $B(p_s)$ denote the bracketed term in Equation \ref{eq:delta_entropy_proof}. In \Cref{sec:gradient-token-taxonomy} we described the behaviour of $B(p_s)$ for the token types defined in \Cref{eq:peak_valley_def}. Here we provide the complete reasoning by computing its sign for $\valley$ and $\peak$ tokens. For convenience, we define $M(p) = -\sum_{k \in [V]} p_k^2 \log p_k$ and the continuous function $h(x) = -x \log x$.

\textbf{Direction for $\valley$ tokens ($p_s < C(p)$).} \\
We extend the bracketed term to a function over $x \in [0, 1]$:
\begin{equation}
    B(x) = M(p) - h(x) + H(p)\big(x - C(p)\big).
\end{equation}
Note that we consider this as a function of variable $x$ with $p$ representing a fixed probability distribution, such that $M(p)$, $H(p)$, and $C(p)$ are independent of $x$. Then the second derivative is $B''(x) = 1/x > 0$, implying $B(x)$ is strictly convex on $(0, 1]$. By convexity, for any $x \in [0, C(p)]$ (noting that $C(p)\leq 1$), $B(x)$ is bounded above by the convex combination of its endpoints. Letting $\lambda = x / C(p)$, we have:
\begin{equation}\label{eq:convexity}
    B(x) \le (1 - \lambda) B(0) + \lambda B(C(p)).
\end{equation}
We evaluate both endpoints to show they are non-positive. At $x=0$, we observe:
\begin{equation}
\begin{split}
    B(0) &= M(p) - H(p)C(p) \\
    &= \mathbb{E}_{k \sim p}[-p_k \log p_k] \\
    &\quad - \mathbb{E}_{k \sim p}[-\log p_k]\mathbb{E}_{k \sim p}[p_k] \\
    &= \text{Cov}_{k \sim p}(p_k, -\log p_k) \le 0,
\end{split}
\end{equation}
which holds because $-\log$ is a monotonically decreasing function. 

At $x=C(p)$, we define a valid probability distribution $q$ where $q_k = p_k^2 / C(p)$. Then:
\begin{equation}
\begin{split}
    B(C(p)) &= M(p) - h(C(p)) \\
    &= -\sum_{k \in [V]} p_k^2 \log \frac{p_k}{C(p)} \\
    &= -C(p) \sum_{k \in [V]} q_k \log \frac{q_k}{p_k} \\
    &= -C(p) D_{\mathrm{KL}}(q \| p) \le 0,
\end{split}
\end{equation}
Since the KL divergence is non-negative.

Having established that both endpoints of $B(x)$ are non-positive, by \Cref{eq:convexity} we find that $B(x) \le 0$ for all $x \in [0, C(p)]$. Since a $\valley$ token satisfies $p_s < C(p)$, it immediately follows that $B(p_s) \le 0$.

\textbf{Direction for Maximum-Probability $\peak$ tokens.} \\
Consider a token $s$ where $p_s = \max_k p_k$. From Property 1 in \Cref{eq:prop_1}, we know $C(p) \le p_s$. We rewrite the bracket $B(p_s)$ by extracting terms:
\begin{equation}
\begin{split}
    B(p_s) &= \sum_{k \in [V]} p_k \big(p_k + p_s - C(p)\big)(-\log p_k) \\
    &\quad - p_s(-\log p_s).
\end{split}
\end{equation}
Notice that the weights $w_k = p_k(p_k + p_s - C(p))$ are strictly non-negative and sum exactly to $p_s$:
\begin{equation}
\begin{split}
    \sum_{k \in [V]} w_k &= \sum_{k} p_k^2 + p_s \sum_{k} p_k - C(p) \sum_{k} p_k \\
    &= C(p) + p_s - C(p) = p_s.
\end{split}
\end{equation}
Furthermore, because $p_s$ is the maximum probability, $-\log p_k \ge -\log p_s$ for all $k$. Applying this inequality directly to our expanded bracket yields:
\begin{equation}
\begin{split}
    B(p_s) &= \sum_{k \in [V]} w_k (-\log p_k) - p_s(-\log p_s) \\
    &\ge \sum_{k \in [V]} w_k (-\log p_s) - p_s(-\log p_s) \\
    &= p_s(-\log p_s) - p_s(-\log p_s) = 0.
\end{split}
\end{equation}
Thus, $B(p_s) \ge 0$ for maximum-probability tokens. 

\textit{Remark:} A $\peak$ token that is not the global maximum may still yield a negative $B(p_s)$. For example, if $p = (0.14, 0.46, 0.40)$ and the sampled token is $p_s = 0.40$, then $C(p) \approx 0.391$. While $p_s > C(p)$ (making it a $\peak$), computing the bracket gives $B(p_s) \approx -0.0083 < 0$. However, these corner cases are rare in practice and our experiments follow the expected trend that $\Pos$-$\mathrm{peak}$ tokens decrease entropy while $\Neg$-$\mathrm{peak}$ tokens increase entropy. \hfill \qedsymbol
\section{Additional Token Dynamics Plots}
\label{app:additional-token-dynamics}
\pgfplotsset{compat=1.18}

\definecolor{qwenNegValley}{RGB}{255, 150, 150}
\definecolor{qwenNegPeak}{RGB}{255, 100, 100}
\definecolor{qwenPosValley}{RGB}{140, 220, 140}
\definecolor{qwenPosPeak}{RGB}{70, 180, 70}

\definecolor{qwenRunValley}{RGB}{0, 168, 181}
\definecolor{qwenRunPeak}{RGB}{100, 110, 120}
\definecolor{qwenRunPos}{RGB}{44, 160, 44}
\definecolor{qwenRunNeg}{RGB}{210, 115, 220}
\definecolor{qwenRunDrdapo}{RGB}{219, 0, 117}
\pgfplotstableread[col sep=comma]{plots/data/numina_qwen_neg_challenger.csv}\qwenDataNegValley
\pgfplotstableread[col sep=comma]{plots/data/numina_qwen_neg_favorite.csv}\qwenDataNegPeak
\pgfplotstableread[col sep=comma]{plots/data/numina_qwen_pos_challenger.csv}\qwenDataPosValley
\pgfplotstableread[col sep=comma]{plots/data/numina_qwen_pos_favorite.csv}\qwenDataPosPeak

\pgfplotstableread[col sep=comma]{plots/data/numina_qwen_valley.csv}\qwenRunValleyData
\pgfplotstableread[col sep=comma]{plots/data/numina_qwen_peak.csv}\qwenRunPeakData
\pgfplotstableread[col sep=comma]{plots/data/numina_qwen_pos.csv}\qwenRunPosData
\pgfplotstableread[col sep=comma]{plots/data/numina_qwen_neg.csv}\qwenRunNegData
\pgfplotstableread[col sep=comma]{plots/data/numina_qwen_dr_dapo-10-0.1.csv}\qwenRunDrDapo

\begin{figure*}[htbp]
\centering
\begin{tikzpicture}

    \begin{groupplot}[
        group style={
            group size=2 by 2,
            horizontal sep=1.8cm,
            vertical sep=1.7cm,
        },
        width=0.46\linewidth,
        height=0.22\linewidth,
        grid=both,
        grid style={line width=.1pt, draw=gray!25},
        major grid style={line width=.2pt, draw=gray!35},
        label style={font=\small},
        tick label style={font=\footnotesize},
        title style={font=\small\bfseries, yshift=-2pt},
        filter discard warning=false
    ]

        \nextgroupplot[
            restrict x to domain=0:60,
            xmin=0, xmax=60,
            ylabel={Reward},
            title={(a) Reward Optimization via Peak-Valley Dynamics},
            legend style={
                font=\scriptsize, draw=none, fill=none,
                at={(1.12, -0.25)},
                anchor=north,
                legend columns=4,
                /tikz/every even column/.append style={column sep=0.4cm}
            }
        ]
        \addplot[color=qwenNegValley, line width=1.2pt, no marks, dashed, dash pattern=on 3pt off 2pt] table [x index=0, y index=1] {\qwenDataNegValley};
        \addlegendentry{Neg-Valley}

        \addplot[color=qwenNegPeak, line width=1.2pt, no marks, solid] table [x index=0, y index=1] {\qwenDataNegPeak};
        \addlegendentry{Neg-Peak}

        \addplot[color=qwenPosValley, line width=1.2pt, no marks, dashed, dash pattern=on 3pt off 2pt] table [x index=0, y index=1] {\qwenDataPosValley};
        \addlegendentry{Pos-Valley}

        \addplot[color=qwenPosPeak, line width=1.2pt, no marks, solid] table [x index=0, y index=1] {\qwenDataPosPeak};
        \addlegendentry{Pos-Peak}

        \nextgroupplot[
            restrict x to domain=0:60,
            xmin=0, xmax=60,
            ylabel={Entropy},
            title={(b) Exploration Decay Profiles}
        ]
        \addplot[color=qwenPosValley, line width=1.2pt, no marks, dashed, dash pattern=on 3pt off 2pt] table [x index=0, y index=2] {\qwenDataPosValley};
        \addplot[color=qwenPosPeak, line width=1.2pt, no marks, solid] table [x index=0, y index=2] {\qwenDataPosPeak};
        \addplot[color=qwenNegValley, line width=1.2pt, no marks, dashed, dash pattern=on 3pt off 2pt] table [x index=0, y index=2] {\qwenDataNegValley};
        \addplot[color=qwenNegPeak, line width=1.2pt, no marks, solid] table [x index=0, y index=2] {\qwenDataNegPeak};

        \nextgroupplot[hide axis]

        \nextgroupplot[
            restrict x to domain=0:300,
            xmin=0, xmax=300,
            width=0.75\linewidth,
            height=0.27\linewidth,
            xshift=-0.24\linewidth,
            ylabel={Reward},
            title={(c) Integrated Trajectory Comparison},
            legend style={
                font=\scriptsize,
                draw=none,
                fill=none,
                at={(0.5, -0.28)},
                anchor=north,
                legend columns=5,
                /tikz/every even column/.append style={column sep=0.4cm}
            }
        ]

        \addplot[color=qwenRunPeak, line width=1.2pt, no marks, smooth, each nth point=3] table [x index=0, y index=1] {\qwenRunPeakData};
        \addlegendentry{Peak}

        \addplot[color=qwenRunPos, line width=1.2pt, no marks, smooth, each nth point=3] table [x index=0, y index=1] {\qwenRunPosData};
        \addlegendentry{Pos}

        \addplot[color=qwenRunNeg, line width=1.2pt, no marks, smooth, each nth point=3] table [x index=0, y index=1] {\qwenRunNegData};
        \addlegendentry{Neg}

        \addplot[color=qwenRunDrdapo, line width=1.4pt, no marks, dashed, dash pattern=on 4pt off 2pt, smooth, each nth point=3] table [x index=0, y index=1] {\qwenRunDrDapo};
        \addlegendentry{DAPO}

        \addplot[color=qwenRunValley, line width=1.2pt, no marks, smooth, each nth point=3] table [x index=0, y index=1] {\qwenRunValleyData};
        \addlegendentry{Valley}

    \end{groupplot}
\end{tikzpicture}

\caption{We train \qwen{} on \numina{} with one group of our taxonomy at a time. Panel (\subref{fig:qwen_ablation_entropy}) validates \Cref{eq:delta_entropy}: $\Pos$-\(\valley\) and $\Neg$-\(\peak\) tokens increase entropy, while $\Pos$-\(\peak\) and $\Neg$-\(\valley\) tokens decrease entropy. Entropy-increasing groups collapse immediately in (\subref{fig:qwen_ablation_reward}). $\Pos$-\(\peak\) is stable but does not match the best integrated trajectory in (\subref{fig:qwen_ablation_trajectory}), motivating objectives that balance exploration and exploitation. We observed that the entropy of \qwen{} is lower than \smol{}. This could explain why $\Neg$-\(\valley\) did not collapse immediately. In contrast, \(\valley\)-only performs on-par with the baseline DAPO in this setting, showing that it is a promising candidate for future study. $\Pos$-only remains competitive in all settings.}
\label{fig:qwen_ablation_matrix}

\phantomsubcaption\label{fig:qwen_ablation_reward}
\phantomsubcaption\label{fig:qwen_ablation_entropy}
\phantomsubcaption\label{fig:qwen_ablation_trajectory}

\end{figure*}

We include additional entropy and reward plots for \numina{} runs on \qwen{}. These plots confirm the entropy direction predictions from \Cref{sec:gradient-token-taxonomy} in \Cref{{fig:qwen_ablation_matrix}}: $\pos$-\(\valley\) and $\nneg$-\(\peak\) updates increase entropy, while $\pos$-\(\peak\) and $\nneg$-\(\valley\) updates decrease entropy.

\section{WAPO Gradient Analysis}\label{app:wapo_analysis}

We begin by deriving \Cref{eq:wapo_success_gradient} from \Cref{sec:wapo}.
\begin{proposition}
We aim to prove that for a policy model $\pi_\theta$ operating under a binary reward setting ($r_i \in \{0,1\}$), the gradient of the probability $q_x$ of generating a correct answer can be empirically estimated over $G$ rollouts as:
\begin{equation}\label{eq:policy_gradient_app}
    \nabla q_x \approx \frac{1}{G} \sum_{i=1}^{G} \sum_{j=1}^{T_i} r_i\nabla \log p_{ij}.
\end{equation}
\end{proposition}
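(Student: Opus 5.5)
The plan is to write $q_x$ as an expectation of the binary reward, apply the log-derivative (REINFORCE) identity, factor the sequence log-probability autoregressively, and then replace the expectation by a Monte Carlo average over the $G$ rollouts. First, since $r(x,y)\in\{0,1\}$, we have
\begin{equation*}
q_x=\sum_{y}\pi_\theta(y\mid x)\,r(x,y)=\mathbb{E}_{y\sim\pi_\theta(\cdot\mid x)}[r(x,y)],
\end{equation*}
where the sum runs over all finite completions. Differentiating under the sum is allowed for a finite vocabulary with a maximum length $T$, because the sum is then finite. The reward does not depend on $\theta$, and $\nabla\pi_\theta=\pi_\theta\nabla\log\pi_\theta$. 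Together these give
\begin{equation*}
\nabla q_x=\mathbb{E}_{y\sim\pi_\theta}\big[r(x,y)\,\nabla\log\pi_\theta(y\mid x)\big].
\end{equation*}

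Second, I use the autoregressive factorization $\pi_\theta(y\mid x)=\prod_{j=1}^{T}\pi_\theta(y_j\mid x,y_{<j})$. Taking logs turns the sequence score into a sum of token scores, $\nabla\log\pi_\theta(y\mid x)=\sum_{j=1}^{T}\nabla\log\pi_\theta(y_j\mid x,y_{<j})$. This yields the exact identity
\begin{equation*}
\nabla q_x=\mathbb{E}_{y}\Big[r(x,y)\sum_{j}\nabla\log p_j\Big].
\end{equation*}

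Third, I draw $y_1,\dots,y_G$ i.i.d.\ from $\pi_\theta(\cdot\mid x)$ and replace the expectation by the empirical mean. This gives exactly $\frac1G\sum_i\sum_j r_i\nabla\log p_{ij}$ with $p_{ij}$ as defined in Section~\ref{sec:wapo}. The estimator is unbiased. Its variance is finite because the summands are bounded: rewards lie in $\{0,1\}$ and the score of a softmax policy with bounded logit Jacobians is bounded. By the law of large numbers it converges almost surely, with error $O(G^{-1/2})$ by the CLT. This is the sense in which ``$\approx$'' holds.

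The main obstacle is interpretive rather than technical: making precise what ``$\approx$'' means and which sampling distribution is used. In the actual algorithm, rollouts come from $\pi_{\theta_{\mathrm{old}}}$, not $\pi_\theta$. I would state the result at $\theta=\theta_{\mathrm{old}}$, i.e.\ on-policy. For nearby $\theta$ I would note that the importance-ratio-weighted version, with weights $\prod_j\rho_{ij}$, is the unbiased estimator. The unweighted form then holds up to $O(\|\theta-\theta_{\mathrm{old}}\|)$ corrections. A secondary point is the variable-length issue: the sequence probability includes the EOS token, so $T_i$ counts it, and no $1/T_i$ normalization appears.
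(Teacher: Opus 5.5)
Your proposal is correct and follows the paper's proof step for step. You write $q_x$ as the expected binary reward, apply the log-derivative trick, split $\log P(y_i)$ into a sum of token log-probabilities, and replace the expectation by the empirical mean over $G$ rollouts drawn from $\pi_\theta$. Your extra points do not change the argument but go beyond what the paper states: the finite-sum justification for exchanging $\nabla$ and $\sum$, the unbiasedness and law-of-large-numbers/CLT reading of ``$\approx$'', and the caveat that the identity is on-policy while rollouts actually come from $\pi_{\theta_{\mathrm{old}}}$.
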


\begin{proof}
Let $y_i = (y_{i1}, y_{i2}, \dots, y_{iT_i})$ denote a sequence of $T_i$ tokens generated by the policy model $\pi_\theta$ in response to a prompt $x$. The joint probability of generating the entire sequence $y_i$ is the product of the individual step-wise token probabilities:
\begin{equation}
    P(y_i) = \prod_{j=1}^{T_i} \pi_{\theta}(y_{ij} \mid y_{i,j-1}\dots y_{i1}, x) = \prod_{j=1}^{T_i} p_{ij}.
\end{equation}
Because the reward $r(x, y_i)$ (which we abbreviate as $r_i$) is binary, taking values in $\{0, 1\}$, the probability $q_x$ that the model correctly answers the prompt is exactly equivalent to the expected reward over all possible generated sequences:
\begin{equation}
    q_x = \mathbb{E}_{y_i \sim \pi_\theta} [r_i] = \sum_{y_i} r_i P(y_i).
\end{equation}

We compute the gradient of $q_x$ with respect to the model parameters $\theta$. Since the reward function evaluates the correctness of a sequence and does not intrinsically depend on $\theta$, the gradient operator only applies to the sequence probability:
\begin{equation}\label{eq:policy-gradient-seq}
    \nabla q_x = \sum_{y_i} r_i\nabla P(y_i).
\end{equation}
We apply the standard log-derivative trick, $\nabla P(y_i) = P(y_i) \nabla \log P(y_i)$, to rewrite this summation back into the form of an expectation:
\begin{equation}
\begin{aligned}
    \nabla q_x &= \sum_{y_i} r_i P(y_i) \nabla \log P(y_i)  \\
    &= \mathbb{E}_{y_i \sim \pi_\theta} \big[ r_i\nabla \log P(y_i)  \big].
\end{aligned}
\end{equation}
Next, we expand the log probability of the full sequence. Because the sequence probability is a product of stepwise conditional probabilities, its logarithm is a sum:
\begin{equation}
    \log P(y_i) = \log \left( \prod_{j=1}^{T_i} p_{ij} \right) = \sum_{j=1}^{T_i} \log p_{ij}.
\end{equation}
Substituting this expansion into our expected gradient yields the exact analytical policy gradient:
\begin{equation}
    \nabla q_x = \mathbb{E}_{y_i \sim \pi_\theta} \Bigg[ \sum_{j=1}^{T_i} r_i\nabla \log p_{ij} \Bigg].
\end{equation}

In practice, GRPO and similar reinforcement learning algorithms approximate this exact expectation using Monte Carlo sampling. By drawing a group of $G$ independent rollouts $\{y_1, y_2, \dots, y_G\}$ from the policy $\pi_\theta$, we replace the expectation with the empirical mean over the $G$ samples, yielding the final objective function gradient used to update the policy:
\begin{equation}
    \nabla q_x \approx \frac{1}{G} \sum_{i=1}^{G} \sum_{j=1}^{T_i} r_i\nabla \log p_{ij}.
\end{equation}
\end{proof}

\begin{proposition}
Consider a positive-only GRPO-style policy gradient method with binary rewards $r_i \in \{0,1\}$. In a positive-only setting, updates are restricted to successful trajectories, meaning the advantage is effectively masked by the reward: $A_{i}^+ = (r_i - \bar{r})r_i$, where $\bar{r} = \frac{1}{G}\sum_{k=1}^G r_k$ is the group mean reward. Assuming the empirical mean $\bar{r}$ closely approximates the true success probability $q_x$ (which we abbreviate as $q$), the gradient objective satisfies:
\begin{equation}\label{eq:adv_policy_grad}
    \frac{1}{D} \sum_{i=1}^{G} \sum_{j=1}^{T_i} A_{i}^+ \nabla \log p_{ij} \approx \alpha \cdot \nabla q_x,
\end{equation}
where the evaluated term $\alpha \cdot \nabla q_x$ corresponds to the choices of $D$ as follows:
\renewcommand{\arraystretch}{1.5}
\begin{center}
\begin{tabular}{|l|l|}
    \hline
        $D$ & Evaluated $\alpha \cdot \nabla q_x$ \\ \hline
        $G$ & $(1 - q) \nabla q_x$ \\ 
        $G \cdot \bar{r}$ & $\frac{1 - q}{q} \nabla q_x$ \\ \hline
\end{tabular}
\end{center}
\end{proposition}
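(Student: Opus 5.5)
The plan is to reduce everything to the previous proposition, i.e.\ \Cref{eq:policy_gradient_app}. The key algebraic observation is that for binary rewards $r_i^2=r_i$, so the masked advantage factorizes as
\begin{equation*}
A_i^+ = (r_i-\bar r)r_i = r_i - \bar r\, r_i = (1-\bar r)\,r_i .
\end{equation*}
The factor $(1-\bar r)$ does not depend on $i$ or $j$, so it can be pulled out of the double sum:
\begin{equation*}
\frac{1}{D}\sum_{i=1}^{G}\sum_{j=1}^{T_i} A_i^+\nabla\log p_{ij}
= \frac{G(1-\bar r)}{D}\cdot\frac{1}{G}\sum_{i=1}^{G}\sum_{j=1}^{T_i} r_i\nabla\log p_{ij}.
\end{equation*}

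Next I will apply the previous proposition to replace the inner normalized sum by $\nabla q_x$. I will then use the stated assumption $\bar r\approx q$ to replace the scalar prefactor.

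For $D=G$, the prefactor is $1-\bar r\approx 1-q$, which gives $(1-q)\nabla q_x$.

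For $D=G\bar r$, the prefactor is $(1-\bar r)/\bar r\approx (1-q)/q$, which gives $\frac{1-q}{q}\nabla q_x$. Here I must restrict to groups with $\bar r>0$, so that at least one winner exists. This matches the conditioning in \Cref{eq:wapo_objective}; in groups with no winner the left-hand side is zero anyway.

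The main obstacle is making the ``$\approx$'' honest. Two approximations are combined: the Monte Carlo estimate of $\nabla q_x$, and the plug-in $\bar r\approx q$. Since $\bar r$ and the gradient estimator are computed from the same samples, they are correlated. For this reason, the expectation of the product is not exactly $(1-q)\nabla q_x$. Because the statement is phrased as an approximation under the assumption that $\bar r$ closely approximates $q$, I will treat the two substitutions as separate first-order replacements.

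I will also note briefly that the correlation bias vanishes as $G\to\infty$, since $\bar r\to q$ almost surely by the law of large numbers. By the continuous mapping theorem, $(1-\bar r)/\bar r\to(1-q)/q$ whenever $q>0$. This leaves a consistent rather than unbiased estimator.
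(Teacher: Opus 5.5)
Your proposal is correct and follows the paper's proof essentially step for step. You factor $A_i^+=(1-\bar r)r_i$, pull out the scalar to get $\frac{G(1-\bar r)}{D}$ times the Monte Carlo estimate of $\nabla q_x$ from the previous proposition, and substitute $\bar r\approx q$ for $D=G$ and $D=G\bar r$; your added remarks on restricting to $\bar r>0$ and on the sample correlation (a consistent rather than unbiased estimator) are sound refinements that the paper leaves implicit.
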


\begin{proof}
Let $P(y_i) = \prod_{j=1}^{T_i} p_{ij}$ be the probability of generating the sequence $y_i$. By bringing the inner sum over $j$ inside the derivative and $\log$ we find
\begin{equation}
    \sum_{j=1}^{T_i} \nabla \log p_{ij} = \nabla \log P(y_i).
\end{equation}
Substituting this into the left-hand side (LHS) of \Cref{eq:adv_policy_grad} yields:
\begin{equation}
    \text{LHS} = \frac{1}{D} \sum_{i=1}^{G}  A_i^+ \nabla \log P(y_i).
\end{equation}
Because rewards are binary, the advantage only applies when $r_i = 1$, and we can rewrite $A_i^+ = (1 - \bar{r})r_i$. Substituting this simplified advantage back into our sum,
\begin{equation}
    \text{LHS} = \frac{1 - \bar{r}}{D} \sum_{i=1}^{G} r_i \nabla \log P(y_i).
\end{equation}
From Equation \ref{eq:policy-gradient-seq} in our previous proof, we established that the empirical estimate for the gradient of the success probability is $\nabla q_x \approx \frac{1}{G} \sum_{i=1}^{G} r_i\nabla \log P(y_i)$. Rearranging this relationship gives:
\begin{equation}
    \sum_{i=1}^{G} r_i \nabla \log P(y_i) \approx G \nabla q_x.
\end{equation}
Substituting this into our LHS equation yields
\begin{equation}
    \text{LHS} \approx \frac{G(1 - \bar{r})}{D} \nabla q_x.
\end{equation}
We now assume that the empirical mean reward over the group $G$ approximates the true probability of generating a correct answer, such that $\bar{r} \approx q$. Applying this approximation, we arrive at the general form:
\begin{equation}
    \text{LHS} \approx \frac{G(1 - q)}{D} \nabla q_x.
\end{equation}
Finally, we evaluate this expression for the two specific choices of the normalizer $D$:
\vspace{0.5em}

\noindent \textbf{Case 1:} $D = G$. \\
Substituting $D = G$ into the general form, the $G$ terms cancel out:
\begin{equation}
    \text{LHS} \approx (1 - q) \nabla q_x.
\end{equation}
This confirms the first row of the table.

\vspace{0.5em}

\noindent \textbf{Case 2:} $D = G \cdot \bar{r}$. \\
Using our approximation $\bar{r} \approx q$, we substitute $D \approx G \cdot q$:
\begin{equation}
    \text{LHS} \approx  \frac{1 - q}{q} \nabla q_x.
\end{equation}
This confirms the second row of the table, completing the proof. \hfill
\end{proof}

\section{Baseline Formulations}

Here we review the baselines that we compare WAPO against. Let $x$ be a prompt, $\{y_i\}_{i=1}^G$ the sampled completions for $x$, and $y_{i,t}$ the $t$-th token of completion $y_i$. We define the token-level importance ratio:
\begin{equation*}
\rho_{it}(\theta) = \frac{\pi_\theta(y_{i,t}\mid x,y_{i,<t})}{\pi_{\theta_{\mathrm{old}}}(y_{i,t}\mid x,y_{i,<t})}.
\end{equation*}
For \GRPO{} and \GSPO{}, advantages are group-normalized: 
\begin{align*}
A_i &= \frac{r_i-\bar r}{\operatorname{std}(\{r_j\}_{j=1}^G)+\epsilon}, \\
\bar r &= \frac{1}{G}\sum_{j=1}^G r_j. 
\end{align*}

\paragraph{\GRPO}
The \GRPO{} objective uses token-level clipped importance ratios: 
\begin{equation*}
\begin{aligned}
\mathcal{L}_{\GRPO}(\theta) &= -\frac{1}{G}\sum_{i=1}^G \frac{1}{|y_i|} \sum_{t=1}^{|y_i|} \min \Biggl( \rho_{it}(\theta) A_i, \\
&\quad \operatorname{clip}\bigl(\rho_{it}(\theta), 1-\epsilon, 1+\epsilon\bigr) A_i \Biggr).
\end{aligned}
\end{equation*}

\paragraph{\GSPO}
\GSPO{} replaces token-level ratios with a sequence-level ratio. In our implementation, the sequence ratio is the geometric mean of token ratios:
\begin{equation*}
\rho_i^{\mathrm{seq}}(\theta) = \exp\!\left( \frac{1}{|y_i|} \sum_{t=1}^{|y_i|} \log \rho_{it}(\theta) \right).
\end{equation*}
The objective is:
\begin{equation*}
\begin{aligned}
\mathcal{L}_{\GSPO}(\theta) &= -\frac{1}{G}\sum_{i=1}^G \frac{1}{|y_i|} \sum_{t=1}^{|y_i|} \min \Biggl( \rho_i^{\mathrm{seq}}(\theta) A_i, \\
&\quad \operatorname{clip}\bigl(\rho_i^{\mathrm{seq}}(\theta), 1-\epsilon, 1+\epsilon\bigr) A_i \Biggr).
\end{aligned}
\end{equation*}

\paragraph{\DAPO}
For \DAPO{}, advantages are mean-centered but not variance-normalized:
\begin{equation*}
A_i^{\DAPO} = r_i-\bar r.
\end{equation*}
The objective uses token-level ratios with asymmetric clipping:
\begin{equation*}
\begin{aligned}
&\mathcal{L}_{\DAPO}(\theta) = -\frac{1}{|\mathcal{G}|} \sum_{x\in\mathcal{G}} \frac{1}{\sum_{i=1}^G |y_i|} \sum_{i=1}^G\sum_{t=1}^{|y_i|} \min \Big( \\
&\qquad \rho_{it}(\theta) A_i^{\DAPO},  \operatorname{clip}\bigl(\rho_{it}(\theta), \epsilon_{\mathrm{low}}, \epsilon_{\mathrm{high}}\bigr) A_i^{\DAPO} \Big).
\end{aligned}
\end{equation*}

\section{Experiment Details}
All our experiments are conducted on two NVIDIA A6000 GPUs, where we use one GPU for vllm rollout generation and one GPU for policy updates.

\subsection{Math Environment}
\label{app:math-env}

For mathematical reasoning, each example consists of a problem \(x\) and a ground-truth answer \(a\). The model produces a response \(y\), which must contain both a reasoning segment and a final answer segment. We append the following instruction to each math problem:
\begin{quote}
\small
Put your reasoning inside \texttt{<think>...</think>} tags, then write your final answer as: \texttt{Answer: <your answer>}.
\end{quote}
The required response format is
\[
\begin{aligned}
y ={}& \texttt{<think>}~\text{reasoning}~\texttt{</think>} \\
& \texttt{Answer: }~\text{answer}.
\end{aligned}
\]
The parser extracts the text following \texttt{Answer:} as the model's final answer \(\hat{a}(y)\). The format indicator is
\[
I_{\mathrm{fmt}}(y)
=
\mathbf{1}\{y \in \mathcal{Y}_{\mathrm{fmt}}\},
\]
where \(\mathcal{Y}_{\mathrm{fmt}}\) is the set of responses matching the required format. The answer correctness indicator is
\[
I_{\mathrm{ans}}(y,a)
=
\mathbf{1}\{\hat{a}(y)=a\}.
\]
The math reward is binary:
\[
R_{\mathrm{math}}(y,a)
=
I_{\mathrm{fmt}}(y) I_{\mathrm{ans}}(y,a).
\]
A response receives reward 1 only when it follows the required format and the extracted final answer exactly matches the ground-truth answer; otherwise it receives reward 0.

\subsubsection{\datamath{} dataset.}
For the \datamath{} experiments, we use the augmented split released with PRM800K \citep{lightman2024let}. This split was constructed to reduce overfitting to the original 7,500-problem \datamath{} training set: 4,500 problems from the original \datamath{} test split are added to training, yielding 12,000 training problems, and the remaining 500 test problems are held out for evaluation. Each example contains a problem statement, reference solution, final answer, subject, difficulty level, and unique identifier. We use the problem statement and final answer only. Each problem is converted into a single-turn chat prompt. 

\subsubsection{\numina{} dataset.}
\label{app:numina-prep}
We start from the \numina{} training split \cite{aimo2025numinamathlean, wang2025kimina}. The raw split contains 104,155 examples. We filter examples before training as follows: keep only examples whose question type is either math-word-problem or; drop examples with missing or empty problem; drop examples with missing or empty answer; and keep only answers that are simple numeric strings, namely integers, decimals, or plain fractions matching forms such as $3, -2, 4.5, \text{or } 7/8$. This removes symbolic answers, interval answers, LaTeX fractions such as $\frac{1}{2}$, multiple-answer strings, and labels such as unknown.                        
After filtering, the dataset contains 21,251 examples: 18,273 math-word-problem examples and 2,978 multiple-choice-question examples. We shuffle the filtered dataset with a fixed seed. By default, the first 1,000 shuffled examples are used as evaluation data and the remaining 20,251 examples are used for training. 

\subsection{Multi-hop QA Environment}
\label{app:qa-env}

For retrieval-based multi-hop question answering, each example consists of a question \(x\), a gold answer \(a\), and access to a retrieval corpus. The model interacts with the environment over multiple turns. We adopted Search-R1\cite{jinsearch} style environment interaction, the model will conduct reasoning inside <think>...</think>, issue search query in <search>...</search> and eventually produce a final answer in <answer>...</answer>.

The system prompt used for multi-hop QA has more instructions compared to Search-R1 to prevent hallucinations. We set $max\_turns$ to 10 in all multihop QA training and evaluation.:
\begin{quote}
\small
You are a helpful assistant. \textbf{Your task:}

You need to answer complex questions by retrieving relevant information and reasoning step-by-step.

You \textbf{must} conduct reasoning inside \texttt{<think>} and \texttt{</think>} in each of your responses before you perform a search or give the final answer. The reasoning should include analysing the question, making a plan, and answering sub-questions by reasoning on existing content.

You should obtain knowledge by calling a search engine by \texttt{<search> query </search>} when you come up with a suitable query. The responses from the search engine will be the top search results and will be between the tags \texttt{<information>} and \texttt{</information>}.

Always ground your answer to the information the search engine returns. Do not use any information that is not returned by the search engine. You can call the search engine any number of times you want.

Once enough information is gathered, produce a precise and concise answer to the original question and wrap it in \texttt{<answer>...</answer>}.
\end{quote}
Search turns use:
\[
\begin{aligned}
y_t ={}& \texttt{<think>}~\text{reasoning}~\texttt{</think>} \\
& \texttt{<search>}~\text{query}~\texttt{</search>}.
\end{aligned}
\]
Final-answer turns use:
\[
\begin{aligned}
y_t ={}& \texttt{<think>}~\text{reasoning}~\texttt{</think>} \\
& \texttt{<answer>}~\text{answer}~\texttt{</answer>}.
\end{aligned}
\]
Retrieved evidence is inserted back into the conversation between \texttt{<information>} and \texttt{</information>} tags.

Let \(\hat{a}(y)\) be the extracted final answer. We normalize both prediction and gold answer by lowercasing, removing punctuation and articles, and collapsing whitespace. The F1 reward is computed over normalized answer tokens as
\[
R_{\mathrm{F1}}
=
\frac{2PR}{P+R},
\]
where \(P\) and \(R\) are token precision and recall between \(\hat{a}(y)\) and \(a\). If the prediction and gold answer have no shared normalized tokens, \(R_{\mathrm{F1}}=0\).

We also use a format reward \(R_{\mathrm{fmt}}(y)\), which checks whether assistant messages follow the required structured search or answer format. The final multi-hop QA reward is
\[
R_{\mathrm{QA}}(y,a)
=
\frac{0.3\,R_{\mathrm{fmt}}(y)
+
R_{\mathrm{F1}}(y,a)}{1.3}.
\]

\begin{table*}[t]
\centering
\caption{Training hyperparameters for math and multi-hop QA experiments. Batch size here means the number of rollouts we generated for one batch, micro-batch size means number of rollouts trained on one GPU.}
\label{tab:hyperparams}
\begin{minipage}[t]{0.48\textwidth}
\centering
\begin{tabular}{lc}
\toprule
\multicolumn{2}{c}{\textbf{Math}} \\
\midrule
Hyperparameter & Value \\
\midrule
LoRA rank & 8 \\
LoRA alpha & 32 \\
Learning rate & \(1\times 10^{-5}\) \\
Batch size & 256 \\
Micro-batch size & 1 \\
Mini-batch size & 64 \\
Rollouts per prompt & 8 \\
Max response length & 4096 \\
Max sequence length & 8192 \\
Max grad norm & 1.0 \\
\bottomrule
\end{tabular}
\end{minipage}
\hfill
\begin{minipage}[t]{0.48\textwidth}
\centering
\begin{tabular}{lc}
\toprule
\multicolumn{2}{c}{\textbf{Multi-hop QA}} \\
\midrule
Hyperparameter & Value \\
\midrule
Max Turns & 10 \\
Retriever Top k & 3 \\
LoRA rank & 8 \\
LoRA alpha & 32 \\
Learning rate & \(1\times 10^{-5}\) \\
Batch size & 256 \\
Micro-batch size & 1 \\
Mini-batch-size & 64 \\
Rollouts per prompt & 8 \\
Max response length & 1024 \\
Max sequence length & 8192 \\
Max grad norm & 1.0 \\
\bottomrule
\end{tabular}
\end{minipage}
\end{table*}

\subsubsection{OTT-QA}
We used a subset of 20,000 samples from the \ott{}~\cite{chen2020open} training dataset to generate high quality trajectories for supervised fine-tuning (SFT) \citep{ouyang2022sft}. Among the generated trajectories, we applied rejection sampling to keep trajectories with valid searches, correct formats, and correct answers. We constructed a RL dataset using a disjoint subset of 10,000 samples from the \ott{} training dataset. To ensure exploration during RL we ran 6 rollouts for each question on a Qwen-SFT model and only kept questions with accuracy between 0\% and 50\%, exclusive. All \ott{} checkpoints were evaluated on a held-out test set of 1000 samples.

We created a dense retriever over the \ott{} table and passage corpus. Each Wikipedia passage is represented by its normalized title plus passage text, while each table is serialized using its title, section title, introduction, and markdown-formatted table content. We did not apply a chunking strategy, and all documents were encoded with the BAAI/bge-m3 embedding model~\citep{bge-m3}.

\subsubsection{Hotpot-QA \& 2wiki}
Similar to \ott{}, we used a subset of 10,000 samples from HotpotQ~\citep{yang2018hotpotqa} to generate a SFT coldstart dataset with the rest of the data used for RL training. Hotpot-QA checkpoints are evaluated on a held-out test set that contains 1000 samples. In our OOD experiments, we evaluated Hotpot-QA checkpoints on a 1000 sample subset of the 2wiki~\citep{ho2020constructing} test dataset.

We follow the Search-R1~\cite{jinsearch} instructions to setup a retriever using the wiki-18~\cite{karpukhin2020dense} corpus, and an E5~\cite{wang2022text} embedding model for HotpotQA and 2wiki training and evaluation. 
\subsection{Hyperparameters}
\label{app:hyperparams}

Table~\ref{tab:hyperparams} summarizes the training hyperparameters. We use the same settings across different model checkpoints within each task family. We use a larger learning rate $1 \times 10^{-5}$ because \citet{schulman2025lora} proposed LoRA RL training needs $10 \times$ the learning rate comparing to full fine-tuning.

\subsection{Supplementary Experiments}

\label{app:supp_exp}
\begin{figure*}[!t]
\centering
\setlength{\tabcolsep}{0.6pt}
\renewcommand{\arraystretch}{0.74}
\begin{tikzpicture}[baseline=-0.5ex]
\draw[color=vfSky, line width=0.65pt] (0.00,0) -- (0.34,0);
\fill[color=vfSky] (0.17,0.055) -- (0.225,0) -- (0.17,-0.055) -- (0.115,0) -- cycle;
\node[anchor=west, font=\scriptsize] at (0.40,0) {DAPO};
\draw[color=vfBlue, line width=0.65pt] (1.70,0) -- (2.04,0);
\fill[color=vfBlue] (1.87,0) circle[radius=0.035];
\node[anchor=west, font=\scriptsize] at (2.10,0) {GRPO};
\draw[color=vfOrange, line width=0.65pt] (3.12,0) -- (3.46,0);
\fill[color=vfOrange] (3.25,-0.04) rectangle (3.33,0.04);
\node[anchor=west, font=\scriptsize] at (3.52,0) {GSPO};
\draw[color=vfRed, line width=0.65pt] (4.54,0) -- (4.88,0);
\fill[color=vfRed] (4.71,0.05) -- (4.66,-0.04) -- (4.76,-0.04) -- cycle;
\node[anchor=west, font=\scriptsize] at (4.94,0) {WAPO};
\end{tikzpicture}
\begin{tabular}{ccc}
\scriptsize\textbf{Qwen} & \scriptsize\textbf{SmolLM3} & \scriptsize\textbf{Gemma} \\
\llap{\raisebox{0.07\linewidth}{\rotatebox[origin=c]{90}{\scriptsize\textbf{MATH-500}}}\hspace{1mm}}%
\begin{tikzpicture}
\begin{axis}[wapo eval axis, xmin=0, xmax=32, ymin=76, ymax=88, ylabel={Accuracy}, ytick={76,80,84,88}, xtick={1,2,4,8,16,32}]
\addplot[wapo drdapo plot] coordinates {(1,78.08) (2,80.63) (4,81.97) (8,82.77) (16,83.26) (32,83.60)};
\addplot[wapo grpo plot] coordinates {(1,79.52) (2,81.99) (4,83.28) (8,84.15) (16,84.82) (32,85.20)};
\addplot[wapo gspo plot] coordinates {(1,78.62) (2,81.99) (4,83.99) (8,85.06) (16,85.60) (32,86.00)};
\addplot[wapo method plot] coordinates {(1,78.90) (2,81.44) (4,82.86) (8,83.66) (16,84.10) (32,84.40)};
\end{axis}
\end{tikzpicture} &
\begin{tikzpicture}
\begin{axis}[wapo eval axis, xmin=0, xmax=32, ymin=72, ymax=88, ylabel={}, ytick={72,76,80,84,88}, xtick={1,2,4,8,16,32}]
\addplot[wapo drdapo plot] coordinates {(1,75.66) (2,79.63) (4,82.00) (8,83.47) (16,84.54) (32,85.20)};
\addplot[wapo grpo plot] coordinates {(1,75.02) (2,78.92) (4,81.14) (8,82.45) (16,83.40) (32,84.20)};
\addplot[wapo gspo plot] coordinates {(1,75.88) (2,79.98) (4,82.34) (8,83.82) (16,84.95) (32,86.00)};
\addplot[wapo method plot] coordinates {(1,74.86) (2,80.07) (4,83.19) (8,85.07) (16,86.23) (32,87.00)};
\end{axis}
\end{tikzpicture} &
\begin{tikzpicture}
\begin{axis}[wapo eval axis, xmin=0, xmax=32, ymin=64, ymax=86, ylabel={}, ytick={64,70,76,82}, xtick={1,2,4,8,16,32}]
\addplot[wapo drdapo plot] coordinates {(1,66.50) (2,71.67) (4,75.83) (8,79.15) (16,81.76) (32,84.00)};
\addplot[wapo grpo plot] coordinates {(1,67.33) (2,72.47) (4,76.64) (8,79.92) (16,82.46) (32,84.60)};
\addplot[wapo gspo plot] coordinates {(1,66.88) (2,72.22) (4,76.24) (8,79.30) (16,81.67) (32,83.40)};
\addplot[wapo method plot] coordinates {(1,66.90) (2,72.95) (4,77.67) (8,81.11) (16,83.36) (32,85.00)};
\end{axis}
\end{tikzpicture} \\
\llap{\raisebox{0.07\linewidth}{\rotatebox[origin=c]{90}{\scriptsize\textbf{NuminaMath}}}\hspace{1mm}}%
\begin{tikzpicture}
\begin{axis}[wapo eval axis, xmin=0, xmax=32, ymin=60, ymax=82, ylabel={Accuracy}, ytick={60,66,72,78}, xtick={1,2,4,8,16,32}]
\addplot[wapo drdapo plot] coordinates {(1,63.02) (2,67.72) (4,71.20) (8,74.05) (16,76.37) (32,78.20)};
\addplot[wapo grpo plot] coordinates {(1,64.83) (2,69.91) (4,73.55) (8,76.29) (16,78.43) (32,80.00)};
\addplot[wapo gspo plot] coordinates {(1,65.17) (2,70.28) (4,73.78) (8,76.33) (16,78.37) (32,80.20)};
\addplot[wapo method plot] coordinates {(1,61.99) (2,67.77) (4,71.84) (8,74.95) (16,77.49) (32,79.70)};
\end{axis}
\end{tikzpicture} &
\begin{tikzpicture}
\begin{axis}[wapo eval axis, xmin=0, xmax=32, ymin=56, ymax=86, ylabel={}, ytick={56,64,72,80}, xtick={1,2,4,8,16,32}]
\addplot[wapo drdapo plot] coordinates {(1,58.79) (2,66.12) (4,71.85) (8,76.51) (16,80.26) (32,82.90)};
\addplot[wapo grpo plot] coordinates {(1,58.02) (2,65.86) (4,72.05) (8,76.99) (16,80.91) (32,84.00)};
\addplot[wapo gspo plot] coordinates {(1,58.40) (2,66.54) (4,72.72) (8,77.38) (16,81.00) (32,83.60)};
\addplot[wapo method plot] coordinates {(1,57.63) (2,65.96) (4,72.25) (8,77.15) (16,81.02) (32,84.00)};
\end{axis}
\end{tikzpicture} &
\begin{tikzpicture}
\begin{axis}[wapo eval axis, xmin=0, xmax=32, ymin=40, ymax=75, ylabel={}, ytick={40,50,60,70}, xtick={1,2,4,8,16,32}]
\addplot[wapo drdapo plot] coordinates {(1,42.65) (2,49.56) (4,55.69) (8,61.10) (16,65.80) (32,70.10)};
\addplot[wapo grpo plot] coordinates {(1,43.39) (2,50.59) (4,56.83) (8,62.19) (16,67.03) (32,71.30)};
\addplot[wapo gspo plot] coordinates {(1,44.27) (2,51.96) (4,58.45) (8,64.07) (16,68.92) (32,72.70)};
\addplot[wapo method plot] coordinates {(1,42.78) (2,50.47) (4,57.30) (8,63.30) (16,68.50) (32,73.10)};
\end{axis}
\end{tikzpicture} \\
\end{tabular}
\caption{\textbf{MATH-500 and \numina{} pass@$k$ curves.} The base model is omitted from this figure as its performance is substantially lower than the trained baselines, which would compress the $y$-axis range and make it harder to distinguish differences among the remaining curves.}
\label{fig:math_numina_passk_curves_no_base}
\end{figure*}

\begin{table}[t]
\centering
\scriptsize
\providecolor{vfRed}{HTML}{D55E00}
\providecommand{\xmark}{\(\boldsymbol{\times}\)}
\providecommand{\runcollapse}{\makebox[0pt][l]{\,\textcolor{vfRed}{\footnotesize\xmark}}}
\caption{
\textbf{Main in-distribution results.}
We report average@32 at the best checkpoint, using EM for \ott{}/\hotpot{} and accuracy for \datamath{}/\numina{}. A red \textcolor{vfRed}{\footnotesize\xmark} marks collapse, where reward falls near zero. N/A shows failure within 100 steps.}
\label{tab:main_results}
\begin{tabular}{llcccc}
\toprule
Dataset & Model & DAPO & GRPO & GSPO & WAPO \\
\midrule
\multirow{3}{*}{OTT-QA}
& Qwen & N/A\runcollapse & 20.70 & 27.57 & \textbf{37.44} \\
& Smol & \textbf{32.63} & 31.27 & 11.76 & 29.59 \\
& Gemma & 4.67 & 6.62 & 10.50 & \textbf{13.67} \\
\midrule
\multirow{3}{*}{HotpotQA}
& Qwen & 16.56\runcollapse & 30.69 & 41.30 & \textbf{45.84} \\
& Smol & 42.01\runcollapse & 42.44 & 38.66 & \textbf{43.62} \\
& Gemma & 19.72\runcollapse & 21.09 & 24.36 & \textbf{34.91} \\
\midrule
\multirow{3}{*}{MATH-500}
& Qwen & 78.01 & \textbf{79.51} & 78.62 & 78.90 \\
& Smol & 75.66 & 75.01 & \textbf{75.88} & 74.86 \\
& Gemma & 66.50\runcollapse & \textbf{67.24} & 66.88 & 66.90 \\
\midrule
\multirow{3}{*}{NuminaMath}
& Qwen & 63.02 & 64.83 & \textbf{65.17} & 61.99 \\
& Smol & \textbf{58.79} & 58.02 & 58.40 & 57.63 \\
& Gemma & 42.65\runcollapse & 43.39\runcollapse & \textbf{44.27} & 42.78 \\
\bottomrule
\end{tabular}
\end{table}

Table~\ref{tab:main_results} summarizes the main in-distribution results. On multi-hop QA, \method{} achieves the strongest performance across model families, while \DAPO{} collapses in 4 out of 6 runs. Notably, \DAPO{} collapses within 100 steps on \ott{} with \qwen{}, so we do not report its score. On math benchmarks, \method{} is competitive but trails \GRPO{} or \GSPO{} on some \qwen{} and \smol{} runs, while consistently outperforming \DAPO{}. Additionally we include pass@$k$ curves on the math datasets in Figure \ref{fig:math_numina_passk_curves_no_base} where we see competitive performance in most settings. Overall, \method{} offers the strongest stability-performance tradeoff across tasks and model families.

\end{document}